\newtheorem{theorem}{Theorem}[section]
\newtheorem{proposition}[theorem]{Proposition}
\newtheorem{corollary}[theorem]{Corollary}
\theoremstyle{definition}
\newtheorem{definition}[theorem]{Definition}
\theoremstyle{remark}
\newtheorem{remark}[theorem]{Remark}
\theoremstyle{definition}
\newtheorem{assumption}[theorem]{Assumption}
\renewenvironment{proof}{{\noindent\textit{Proof}.\ }}{\hfill $\square$ \par}
\newcommand{\fft}{\sffamily{FFT}\rmfamily\xspace}
\newcommand{\lora}{\sffamily{LoRA}\rmfamily\xspace}
\newcommand{\freeze}{\sffamily{Freeze}\rmfamily\xspace}
\newcommand{\llamapro}{\sffamily{LLaMA~Pro}\rmfamily\xspace}
\newcommand{\widthpreserve}{\sffamily{Width(Preserve)}\rmfamily\xspace}
\newcommand{\widthadapt}{\sffamily{Width(Adapt)}\rmfamily\xspace}
\newcommand{\scale}{\sffamily{SCALE}\rmfamily\xspace}
\newcommand{\scalepreserve}{\sffamily{SCALE-Preserve}\rmfamily\xspace}
\newcommand{\scaleadapt}{\sffamily{SCALE-Adapt}\rmfamily\xspace}
\newcommand{\scaleroute}{\sffamily{SCALE-Route}\rmfamily\xspace}
\newcommand{\bW}{\bm{W}}
\newcommand{\bX}{\bm{X}}
\newcommand{\bZ}{\bm{Z}}
\newcommand{\g}{\nabla}
\DeclarePairedDelimiter\p{\lparen}{\rparen}
\DeclarePairedDelimiter\ang{\langle}{\rangle} 
\DeclarePairedDelimiter\abs{\lvert}{\rvert}   
\DeclarePairedDelimiter\norm{\lVert}{\rVert}  
\DeclarePairedDelimiter\bkt{[}{]}             
\DeclarePairedDelimiter\set{\{}{\}}           
\DeclarePairedDelimiter\ceil{\lceil}{\rceil}
\DeclarePairedDelimiter\floor{\lfloor}{\rfloor}
\let\oldp\p \def\p{\@ifstar{\oldp}{\oldp*}}
\let\oldang\ang \def\ang{\@ifstar{\oldang}{\oldang*}}
\let\oldabs\abs \def\abs{\@ifstar{\oldabs}{\oldabs*}}
\let\oldnorm\norm \def\norm{\@ifstar{\oldnorm}{\oldnorm*}}
\let\oldbkt\bkt \def\bkt{\@ifstar{\oldbkt}{\oldbkt*}}
\let\oldset\set \def\set{\@ifstar{\oldset}{\oldset*}}
\let\oldceil\ceil \def\ceil{\@ifstar{\oldceil}{\oldceil*}}
\let\oldfloor\floor \def\floor{\@ifstar{\oldfloor}{\oldfloor*}}
\newcommand{\normal}[1]{\normalfont{\text{#1}}}
\newcommand{\w}{\normalfont{\textbf{w}}}
\newcommand{\x}{\normalfont{\textbf{x}}}
\def\gE{\mathbb{E}}
\def\gI{\mathbb{I}}
\def\gP{\mathbb{P}}
\def\gD{\mathcal{D}}
\DeclareMathOperator*{\argmin}{arg\,min}
\newcommand{\sukhoon}[1]{}
\newcommand{\jinwoo}[1]{}
\title{\scale: Upscaled Continual Learning of Large Language Models}
\author{
    Jin-woo Lee\equalcontrib,
    Junhwa Choi\equalcontrib,
    Bongkyu Hwang,
    Jinho Choo,
    Bogun Kim,
    JeongSeon Yi,
    Joonseok~Lee,
    DongYoung Jung,
    Jaeseon Park,
    Kyoungwon Park,
    Suk-hoon Jung\thanks{Corresponding author: sukhoon.jung@samsung.com}
}
\begin{document}
\begin{CJK}{UTF8}{mj}

\maketitle

\begin{abstract}
We revisit continual pre-training for large language models and argue that progress now depends more on scaling the right structure than on scaling parameters alone. We introduce \scale, a width upscaling architecture that inserts lightweight expansion into linear modules while freezing all pre-trained parameters. This preserves the residual and attention topologies and increases capacity without perturbing the base model’s original functionality. \scale is guided by two principles: \emph{Persistent Preservation}, which maintains the base model’s behavior via preservation-oriented initialization and freezing of the pre-trained weights, and \emph{Collaborative Adaptation}, which selectively trains a subset of expansion components to acquire new knowledge with minimal interference. We instantiate these ideas as \scalepreserve (preservation-first), \scaleadapt (adaptation-first), and \scaleroute, an optional routing extension that performs token-level routing between preservation and adaptation heads. On a controlled synthetic biography benchmark, \scale mitigates the severe forgetting observed with depth expansion while still acquiring new knowledge. In continual pre-training on a Korean corpus, \scale variants achieve less forgetting on English evaluations and competitive gains on Korean benchmarks, with these variants offering the best overall stability-plasticity trade-off. Accompanying analysis clarifies when preservation provably holds and why the interplay between preservation and adaptation stabilizes optimization compared to standard continual learning setups.
\end{abstract}

\section{Introduction}

\sukhoon{ continual learning이라는 단어 관련, 
.. CPT that preserves prior knowledge while acquiring new capabilities...
사전적인의미로는 continued learning과 다르게 continual learning이 신규지식을 배우는 것이 맞습니다만, 많은 논문에서 신규지식/기존지식증강 상관없이 중립적으로 쓰이기도 합니다. 
인트로에 domain adaptation/language extention 같은 목적성을 직접 기술하는 것은 어떨까요?
}


The era of effortless gains from brute-force scaling of large language models\,(LLMs) is nearing its end. Recent discussions among leading AI researchers emphasize that further progress will depend less on adding parameters or data and more on scaling \emph{the right structure} while preserving previously acquired knowledge\,\cite{openai,meta}. This view redirects attention to \emph{strategic architectural expansion}: approaches that enable continual pre-training\,(CPT) while preserving the pre-trained knowledge base. Classical continual learning\,(CL) methods—regularization, replay, or parameter isolation—help retain earlier knowledge but do not allocate new representational capacity for adaptation\,\cite{kirkpatrick2017overcoming,rolnick2019experience}. By contrast, function-preserving transformations such as Net2Net\,\cite{chen2015net2net} and its successors show that structural expansion can increase capacity without disrupting the base model's original functionality. However, depth upscaling such as \llamapro\,\cite{llamapro} often perturbs hidden representations and exhibits forgetting during CPT, indicating that architectural balance between preservation and adaptation must be managed more delicately.

\begin{figure}[t!]
\centering
    \includegraphics[width=.9\linewidth]{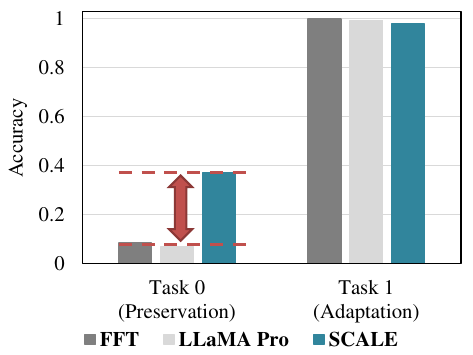}
    \caption{\textbf{Continual biography learning.}}
    \label{fig:biography}
\end{figure}
\begin{figure}[t!]
\centering
    \includegraphics[width=.9\linewidth]{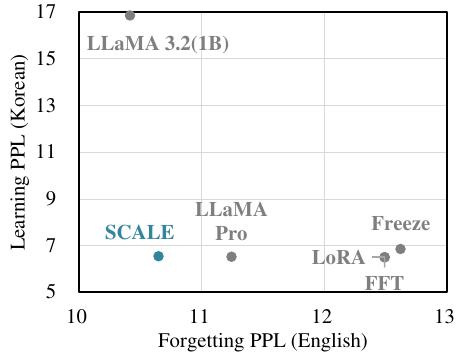}
    \caption{\textbf{Performance landscape.}}
    \label{fig:ppl}
\end{figure}

We therefore propose \scale\,(\textit{up\underline{S}caled \underline{C}ontinu\underline{A}l \underline{LE}arning}), a width upscaling architecture that introduces lightweight expansion inside linear modules while freezing all pre-trained parameters. \scale\ enlarges the representational space of decoder-style LLMs without altering the residual topology or attention structure. Empirically, Figure~\ref{fig:biography} shows that depth-upscaled \llamapro incurs severe forgetting on the \emph{continual biography} task\footnote{Refer to Section~\ref{sec:experiments} for details of continual biography task.}, whereas width-upscaled \scale\ preserves prior knowledge significantly better while adapting to new knowledge. In CPT on the Korean dataset, Figure~\ref{fig:ppl} highlights that \scale\ achieves lower English forgetting perplexity and competitive Korean learning perplexity compared with representative baselines, including \llamapro\ and \freeze\,\cite{zheng2025spurious}, reflecting a superior stability–plasticity balance.

\scale\ is instantiated by two complementary design principles developed empirically and supported theoretically: ① \emph{Persistent Preservation} and ② \emph{Collaborative Adaptation}. \emph{Persistent Preservation} maintains the original function throughout training via preservation-oriented initialization and freezing patterns within the expansion. \emph{Collaborative Adaptation} selectively trains a subset of expansion blocks\,(e.g., in upper layers or specific modules) to capture new domain knowledge while interacting stably with the frozen base. Our preliminary studies reveal that preservation-first configurations exhibit extreme resistance to forgetting, whereas collaborative configurations unlock strong adaptability; together they form a controllable frontier between stability and plasticity.

Based on the two principles, we introduce a family of width-upscaled learning methods:\,(i) \scalepreserve, which enforces preservation by maintaining the pre-trained function;\,(ii) \scaleadapt, which unlocks plasticity by collaborating with the frozen base; and\,(iii) \scaleroute, which routes tokens between preservation and adaptation paths using a similarity-based router. Because adaptation paths can override preservation paths, \scaleroute\ exposes both behaviors within a single forward pass and routes to the more relevant logits. We further derive a convergence advantage of routing-based CL over standard CL, theoretically supporting the observed stability–plasticity gains.

\paragraph{Key Contributions}
\begin{enumerate}
\item \textit{Width Upscaling Architecture.}
\scale\ freezes all pre-trained parameters and expands capacity with lightweight blocks inside linear modules, preserving the base computation graph and residual/attention structure\,(Section~\ref{sec:scale}).

\item \textit{Design Principles with Evidence.}
We formalize \emph{Persistent Preservation} and \emph{Collaborative Adaptation}. Theoretical analyses and preliminary studies show persistent function preservation under the preservation-oriented setup and strong plasticity under the collaborative setup\,(Section~\ref{sec:scale}).

\item \textit{Width-Upscaled Learning Methods.}
We introduce \scalepreserve\,(preservation-first), \scaleadapt\,(adaptation-first), and \scaleroute\,(advantages-of-both). \scaleroute\ routes tokens between preservation and adaptation paths, and enjoys a tighter convergence bound than standard CL\,(Section~\ref{sec:scalelearning}).

\item \textit{Empirical Validation.}
On a controlled biography task and CPT over the Korean dataset, \scale\ variants reduce forgetting on English evaluations while maintaining or improving target-domain performance; \scaleroute\ delivers the strongest stability–plasticity balance among the evaluated baselines\,(\fft, \lora, \freeze, \llamapro) under our settings\,(Section~\ref{sec:experiments}).
\end{enumerate}

\section{Related Work}

\paragraph{Continual Learning}
Continual Learning\,(CL) aims to adapt models to sequentially arriving tasks without catastrophic forgetting of previously acquired knowledge. Classical CL approaches include regularization\,\cite{kirkpatrick2017overcoming, zenke2017continual, aljundi2018memory}, replay\,\cite{rolnick2019experience, shin2017continual, sun2019lamol}, and parameter isolation methods\,\cite{rusu2016progressive, li2021prefix, hu2022lora, zhang2023llama}. With the rise of LLMs, CL research has shifted toward preserving the extensive range of pre-trained knowledge while incorporating new linguistic or domain-specific knowledge. However, conventional CL approaches designed for smaller models or narrow-domain tasks often struggle when applied to LLMs, due to computational and privacy constraints. Recent studies emphasize parameter-efficient fine-tuning\,(PEFT) and architectural extensions that allocate new capacity for adaptation, enabling LLMs to retain general knowledge and improve performance in target domains such as multilingual settings or specialized industries.

\paragraph{Upscaled Learning}
Model upscaling aims to enhance the capabilities of pre-trained LLMs by efficiently expanding their architectures while preserving or improving performance. This approach can be broadly categorized into the types of upscaling: depth and width upscaling. Depth upscaling increases the number of layers to capture more complex representations while leveraging existing pre-trained weights. SOLAR\,\cite{kim2023solar} and \llamapro\,\cite{llamapro} expand model depth through layer duplication and interleaving, followed by continual pre-training to recover performance. In contrast, width upscaling horizontally expands the model by increasing the hidden state dimension or the number of attention heads. Function-preserving transformations\,\cite{chen2015net2net, chen2021bert2bert} enable width upscaling to maintain consistency with the original model outputs. Building on this idea, various width upscaling approaches\,\cite{shen2022staged, yao2023masked, samragh2024scaling} along with depth upscaling initialize larger models from smaller pre-trained ones to minimize training overhead. From the perspective of continual learning, ELLE\,\cite{qin2022elle} and LOIRE\,\cite{han2025loire} apply both depth and width upscaling with function-preserving initialization to mitigate catastrophic forgetting while incorporating new domain-specific knowledge.

\begin{figure*}[t!]
\centering
    \includegraphics[width=.95\textwidth]{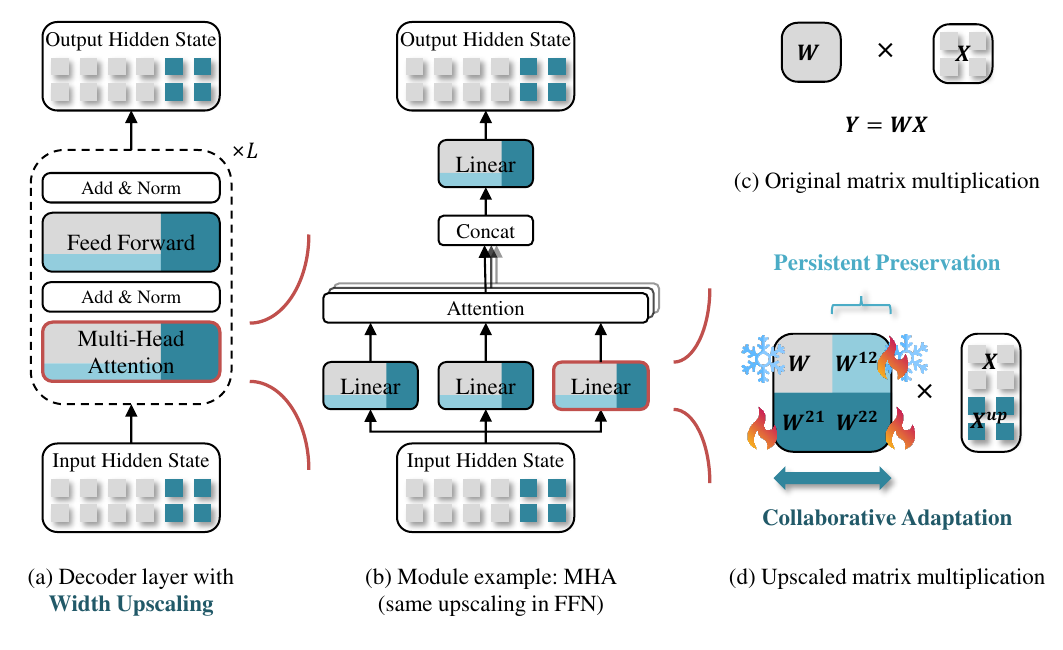}
    \caption{\textbf{Overview of \scale architecture.}}
    \label{fig:overview}
\end{figure*}

\section{Proposed Architecture: \scale}
\label{sec:scale}

In this section, we propose novel width upscaling \scale architecture, provide empirical findings, and corroborate them with theoretical analyses. Above all, an overview of \scale architecture is presented in Section \ref{sec:overview}, followed by two design principles: ① \emph{Persistent Preservation} and ② \emph{Collaborative Adaptation}. Based on \emph{Persistent Preservation} in Section~\ref{sec:preservation}, \scale zero-initializes and freezes $\bW^{12}$ to persistently preserve the original function until the end of training. In the meanwhile, based on \emph{Collaborative Adaptation} in Section~\ref{sec:adaptation}, \scale collaboratively trains certain weight blocks to effectively adapt to new domain knowledge while preserving prior knowledge.

\subsection{Overview of \scale Architecture}
\label{sec:overview}

\sukhoon{Fig 3 wrong representation in QKV projections. a Q prj. should be a single linear so that result Y up forms the new query set for new heads.}
\jinwoo{반영 했습니다.}

\sukhoon{Fig 3 no consistancy in in/out matrix representation in (a), (b), (d). The in/out matrix in (a) and (b) employs a row vector representation for hidden states, whereas (d) adopts a column vector representation. While conventional math adopts column vector representation, deep learning conventionally uses row vector representation as in BERT, GPT, Transformer papars. It may cause confusion for the readers, so need to state X $\in$ $R^{(Dim, Batch)}$, not R(B, D), if using column vector. }

Figure~\ref{fig:overview} shows an overview of the proposed width upscaling \scale architecture. For decoder layers in Figure~3(a), we upscale the width of the input hidden state as well as the dimensions of the Multi-Head Attention\,(MHA) and Feed Forward Network\,(FFN), thus upscaling the width of the output hidden state. As illustrated in Figure~3(d), all matrix multiplications $\bm{WX}$ in the MHA and FFN are expanded to their upscaled ones, formulated as: 
\begin{equation}
\label{eq:upscaled_matmul}
\begin{bmatrix}
\bW & \bW^{12} \\
\bW^{21} & \bW^{22}
\end{bmatrix}
\begin{bmatrix}
\bX \\
\bX^{up}
\end{bmatrix}
\end{equation}
where $\bW^{12}$, $\bW^{21}$, and $\bW^{22}$ denote the upscaled weight matrix blocks and $\bX^{up}$ denotes the upscaled part of the input. In particular, for MHA, this upscaling involves increasing the number of attention heads while keeping the head dimension fixed. In other words, with $\bW$ being the query, key, and value projection matrices, the upscaled outputs 
\begin{equation*}
    \bW^{21}\bX + \bW^{22}\bX^{up}
\end{equation*}
produce the corresponding query, key, and value representations for new heads.
Consequently, \scale has a structural constraint that all the weight matrices in MHA must be expanded such that the number of rows increases by an integer multiple of the head dimension. For embedding and output projection weight matrices, we upscale as follows:
\begin{equation*}
\bW_{in} \mapsto
\begin{bmatrix}
    \bW_{in} \\ \bW_{in}^{up}    
\end{bmatrix}, \quad
\bW_{out} \mapsto
\begin{bmatrix}
    \bW_{out} & \bW_{out}^{up}  
\end{bmatrix}.
\end{equation*}
\sukhoon{위 공식은 $W^{12}$ 가 zero 이기 때문에, 그러한 제약없이 공식이 먼저 나오면 독자가 이해 할 수 없습니다. (후술 되기는 합니다만..) } \jinwoo{여기까지는 initialization 얘기 없이 아키텍쳐 설명만 하려고 합니다. Initialization 은 Preservation 과 연관이 크기 때문에 다음 장에서 처음 설명하고 있습니다.} \sukhoon{그 제약 없이는 위 공식은 성립 불가 입니다. 말씀 하신것 처럼 이부분에서 init을 설명하는 것은 맞지 않은것 같으니.. 우리는 해당 공식이 성립 할 수 있도록 하여 preserevation을 노리는 기법 이다.. 등의 설명이 있으면 될 것 같습니다.} \jinwoo{문제가 됐던 $\bm{Y}$ 노테이션을 원본과 확장 수식에서 전부 제거해서 모순을 없앴습니다. 그리고 Initialization 과 Freeze 는 이어지는 다음 문단에서 설명하도록 paragraph 로 눈에 띄게 구분지었습니다.}
\sukhoon{저는 조금더 정리가 필요해 보입니다. $\bX^{up}$ and $\bm{Y}^{up}$ denotes upscaled input and output hidden state, respectively. 이부분에서 $\bm{Y}^{up}$ 가 쓰이기도 하고.. 기존 공식에서 단순히 $\begin{bmatrix} Y^{'} \\ Y^{up} \end{bmatrix} = $ 으로 쓰시면 어떨까요?
그리고, $\bX^{up}$ and $\bm{Y}^{up}$가 upscaled input and output hidden state 표현이 조금 어색해 보입니다. 영어 문법적으로는 $\begin{bmatrix} \bX \\ \bX^{up} \end{bmatrix}$ 가 upscaled hidden이 맞는것으로 느껴집니다.}
\jinwoo{$\bm{Y}^{up}$ 노테이션을 제거 안한 것은 실수인데, 논문 전체적으로 $\bm{Y}$ 관련 노테이션을 사용하지 않기 때문에 $\bm{Y}^{'}$ 을 추가로 도입하는 것 보다는 아예 제거하는 편이 더 맞는 것 같습니다. $\bX^{up}$의 경우 upscaled part of hidden states 로 표현했습니다.}


\sukhoon{new head 확장이 새로운 형태로 context를 attention하여 기존과 다른 해석을 도출한다..? 라는 내용이 들어가면, 단순 파라메터 확장이 아닌 새로운 지식해석경로가 생기는 것이 잘 표현 될 것 같습니다.}

\subsection{Design Principle 1: \emph{Persistent Preservation}}
\label{sec:preservation}

Appropriate weight block initialization and freeze strategy can prevent forgetting by allowing the original function $\bm{WX}$ to be persistently preserved even until the end of training, thus called \emph{Persistent Preservation}. We mainly insist that $\bW^{12}$ be zero-initialized and frozen and also suggest that $\bW^{21}$ and $\bW^{22}$ be initialized by imitating $\bW$ and $\bW^{12}$, respectively.
\sukhoon{seems to need more explanation on the trainable parts.}

\paragraph{Initialization and Freeze of $\bW^{12}$}

\begin{figure*}[t!]
    \centering
    \includegraphics[width=.55\textwidth]{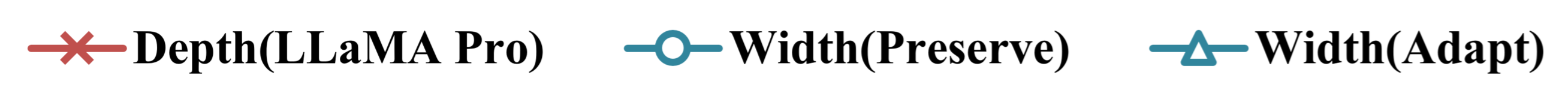} \\
    \begin{subfigure}[b]{.37\textwidth}
        \centering
        \includegraphics[width=\linewidth]{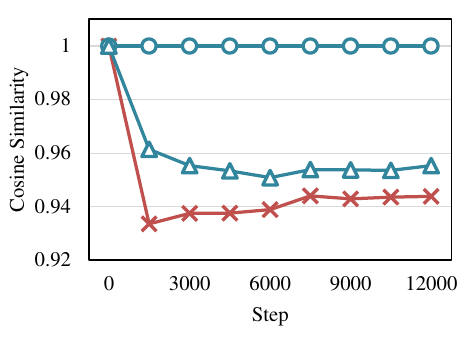}
        \caption{Representation forgetting.}
        \label{fig:depth_width_cosine}
    \end{subfigure}
    \begin{subfigure}[b]{.37\textwidth}
        \centering
        \includegraphics[width=\linewidth]{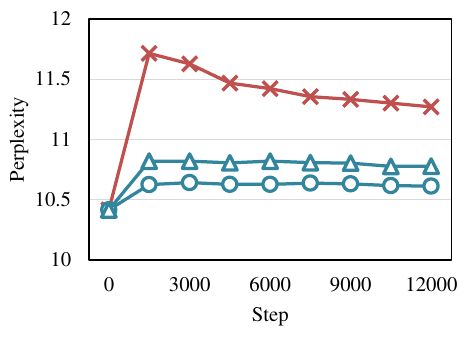}
        \caption{Forgetting perplexity.}
        \label{fig:depth_width_ppl}
    \end{subfigure}
    \caption{\textbf{Comparison of depth and width upscaling from the perspective of how (a) representation forgetting causes (b) forgetting perplexity of pre-trained English knowledge across steps for new domain adaptation.}}
    \label{fig:depth_width}
\end{figure*}

As a preliminary study with regard to function preservation, we compare depth\,(\llamapro\,\cite{llamapro}) and width\,(\scale) upscaling from the perspective of how representation forgetting causes forgetting perplexity of pre-trained English knowledge\footnote{In this section, we perform CPT on FineWeb2 Korean data subset with Llama-3.2-1B as the base model of \scale.}, as shown in Figure~\ref{fig:depth_width}. Specifically, \widthpreserve is a basic setup of \scale with zero-initialized $\bW^{12}$ which is frozen for all layers for function preservation and the rest of randomly initialized trainable weights, whereas \widthadapt is a contrastive setup of \scale including non-preserving $\bW^{12}$ which is set to be trainable for all layers.
\sukhoon{[The term "trainable weights" is not defined in the earlier part of the paper.} \jinwoo{이 섹션 이전인 Architecture Overview 섹션에서는 확장 전후만 설명하기 때문에 Initialization 이나 Trainable 에 대해 설명하지 않으려 하고, 현재 섹션에서 처음 Trainable 개념이 등장합니다. 그런데, 학습 과정에서 너무 당연한 개념이라 따로 정의가 필요한 것 같지는 않은데요. 이 코멘트 고려해서 더 의견 있으시면 구체적으로 의견 주시면 감사하겠습니다.}
\sukhoon{"is based on a basic setup" 이라는 표현을 쓰셨고, 우리의 basic setup인데 앞쪽에 미리 설명되어야 할 것 같아 쓴 코멘트 입니다.
이부분은 나중에 보시죠. 윗 부분 전체 아키를 설명을 추가하게 되면 되면 자연스럽게 추가될 설명 같습니다. 
}

Above all, Figure~\ref{fig:depth_width_cosine} shows representation forgetting measured by cosine similarity of output hidden states after the last decoder layer between step 0 and subsequent training steps. We only compare the original part of the output hidden states because the depth upscaling does not involve expanding the hidden state dimension. As intended in the experimental design, \widthpreserve never forgets the original function $\bm{WX}$, whereas \sffamily{Depth}\rmfamily(\llamapro) abruptly forgets around step 1000 and has a hard time recovering back for the subsequent steps.
This resistance to forgetting stems from the amount of function preservation. The width upscaling with zero-initialized frozen $\bW^{12}$ allows the original function to be persistently preserved until the end of training, whereas the depth upscaling inevitably perturbs the original function even at the beginning steps due to trainable upscaled layers in the middle. In contrast, \widthadapt gradually forgets representation as in Figure~\ref{fig:depth_width_cosine} and ends up exhibiting slightly higher forgetting perplexity than \widthpreserve, which is still significantly lower than \sffamily{Depth}\rmfamily(\llamapro), as in Figure~\ref{fig:depth_width_ppl}.

To corroborate our preliminary study, we analyze the function preservation of width upscaling in Theorem~\ref{thm:preserv_width_upscale} and derive that setting $\bW^{12}$ to $\bm{0}$ is necessary for all layers, which can be achieved by zero-initialization. Furthermore, from the difference between \widthpreserve and \widthadapt, we insist that freeze of $\bW^{12}$ as well as zero-initialization be necessary for \emph{Persistent Preservation}.

\sukhoon{preserve는 원본으로부터 확장 및 학습 후를 뜻할까요? 아니면 확장 전후를 뜻할까요? zero initialization 이라면 학습후 까지를 묵시적으로 뜻하기 때문에 init and fix 개념까지 들어가야 할 것 같고, 확장 전 후라면 init이 아닌 set의 같은 단어를 써야 할 것 같습니다.}
\jinwoo{Appendix 에서 지적해 주신 것처럼 Theorem 전개 과정은 학습 시점이 아니라 확장 전후 시점에 대한 내용이기 때문에, 위쪽에 처음 Theoerem 언급하는 부분에서 set 표현으로 바꾸고 ", which can be achieved by zero-initialization in terms of training" 으로 부연설명 했습니다. 다만, 이 섹션은 initialization 에 대한 내용이기 때문에, 그 이후 문장에서는 \widthpreserve 와 \widthadapt 의 성능 차이로 zero-initialization 뿐만 아니라 freeze 도 필요하다는 것을 실험적으로 주장하고 있습니다.}

\begin{theorem}
\label{thm:preserv_width_upscale}
Width-upscaled network with $\bW^{12}_{\ell}$ set to $\bm{0}$ preserves the original function for all layers $1 \le \ell \le L$.
\end{theorem}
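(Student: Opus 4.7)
The plan is an induction on the layer index $\ell$ that exploits the block structure of~\eqref{eq:upscaled_matmul}. Let $\bX_\ell$ be the hidden state at layer $\ell$ under the original pre-trained network, and let the upscaled network's hidden state at layer $\ell$ be partitioned into its original rows $\tilde{\bX}_\ell^{(1)}$ and its expansion rows $\tilde{\bX}_\ell^{(2)}$. The claim to prove inductively is $\tilde{\bX}_\ell^{(1)} = \bX_\ell$ for every $1 \le \ell \le L$, from which function preservation of each layer follows. For the base case, the embedding $\bW_{in} \mapsto [\bW_{in};\,\bW_{in}^{up}]$ gives $\tilde{\bX}_0^{(1)} = \bW_{in}\bX = \bX_0$, while $\tilde{\bX}_0^{(2)} = \bW_{in}^{up}\bX$ is left unconstrained; only the top block needs to be tracked.

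For the inductive step, the key observation is that every sub-module's top-row output depends only on its top-row input when $\bW_\ell^{12}=\bm{0}$, so the hypothesis propagates through the layer. Concretely I would walk through the four pieces of a decoder block. (i) Every linear map inside MHA or the FFN has the block form of~\eqref{eq:upscaled_matmul}; setting $\bW^{12}=\bm{0}$ collapses its top-row output to $\bW\tilde{\bX}^{(1)}$, which equals the base model's projection by the inductive hypothesis. (ii) Because the attention head dimension is fixed, the pre-trained heads receive $Q$, $K$, $V$ rows that depend only on $\tilde{\bX}^{(1)}$ and interact only among themselves, reproducing the original per-head outputs; the new heads attend independently, and the output projection (another instance of~\eqref{eq:upscaled_matmul} with $\bW^{12}=\bm{0}$) zeroes out their contribution to the original rows. (iii) The FFN's elementwise activation and optional gating commute with the block partition, so applying the linear-block argument to both up- and down-projections carries preservation through. (iv) Residual additions respect the partition and close the step.

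The subtlety I expect to be the main obstacle is the normalization layers. Standard RMSNorm or LayerNorm computes its statistics across the full widened hidden dimension, so in general $\tilde{\bX}^{(2)}$ contaminates the normalized top rows and breaks the clean block-wise induction sketched above. Closing the theorem therefore requires treating normalization in a preservation-compatible way: either apply it block-wise, so that the pre-trained statistics and gain act only on the original rows, or constrain the upscaled gain/bias block to be the identity on the pre-trained rows. Under either convention, the induction closes through $\ell = L$, and a matching preservation choice for $\bW_{out}^{up}$ (e.g., zero-initialization, mirroring the role of $\bW^{12}$) makes the final logits agree with those of the base model. Apart from this normalization bookkeeping, the argument reduces to a routine block-matrix computation with $\bW^{12}_\ell = \bm{0}$ as the sole algebraic lever.
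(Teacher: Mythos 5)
Your argument is correct in its algebraic core, and that core---setting $\bW^{12}_\ell = \bm{0}$ makes every expanded linear map block lower-triangular, so the original rows of the output depend only on the original rows of the input and the top block propagates unchanged by induction---is exactly the lever the paper uses. Where you differ is scope. The paper does not prove the theorem for the full decoder you analyze: it retreats to a residual-only abstraction (Definition~\ref{def:residual}, justified in Remark~\ref{remark:abstraction}) in which a layer is the single map $\bX_{\ell} = (\bW_{\ell} + \bm{I})\bX_{\ell-1}$, so its entire proof is the one-line computation that the top block of $\bigl[\begin{smallmatrix}\bW_{\ell}+\bm{I} & \bm{0}\\ \bW^{21}_{\ell} & \bW^{22}_{\ell}+\bm{I}^{up}\end{smallmatrix}\bigr]\bigl[\begin{smallmatrix}\bX_{\ell-1}\\ \bX^{up}_{\ell-1}\end{smallmatrix}\bigr]$ equals $F(\bX_{\ell-1})$, followed by recursion over $\ell$; attention, the FFN nonlinearity, and normalization never appear. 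Your walk through the embeddings, the per-head partition of MHA, the FFN, and the residual stream therefore proves a strictly stronger statement about the architecture actually described in Section~\ref{sec:overview}, and the obstacle you flag is genuine rather than pedantic: RMSNorm or LayerNorm computed over the widened hidden dimension does let $\tilde{\bX}^{(2)}$ contaminate the normalized original rows, and the paper's abstraction sidesteps this by omitting normalization rather than resolving it. The same holds for your remark on $\bW^{up}_{out}$: since Eq.~\eqref{eq:Z_preserve} adds $\bZ^{up}=\bW^{up}_{out}\bX^{up}_{L}$ to the logits, exact logit-level preservation needs a zeroing condition the theorem never states. In short, the paper's route buys a short, fully rigorous proof of a toy statement; yours buys a proof about the implemented model at the price of two extra conventions (block-wise or gain-constrained normalization, and a zeroed output expansion) that you correctly identify as necessary and that the paper leaves implicit.
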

\begin{proof}
We defer proof to Appendix \ref{sec:preserv_width_upscaling} for lack of space.
\end{proof}

\paragraph{Initialization of $\bW^{21}$ and $\bW^{22}$}

\begin{figure*}[t!]
    \centering
    \includegraphics[width=.4\textwidth]{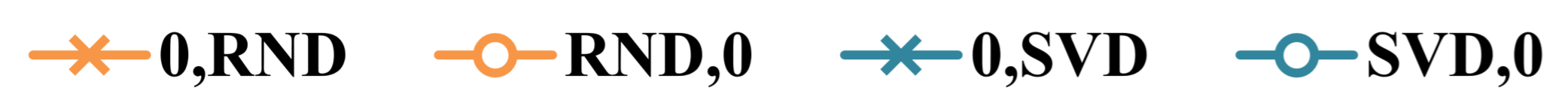} \\
    \begin{subfigure}[b]{.4\textwidth}
        \centering
        \includegraphics[width=\linewidth]{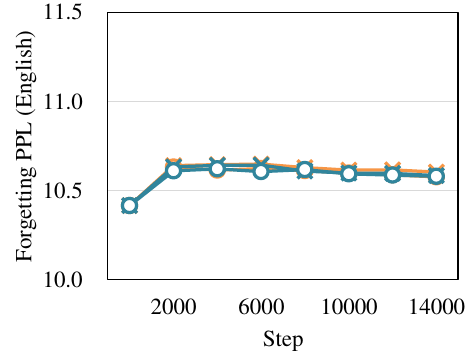}
        \caption{Forgetting perplexity on test English data.}
        \label{fig:init_ppl_eng}
    \end{subfigure}
    \begin{subfigure}[b]{.4\textwidth}
        \centering
        \includegraphics[width=\linewidth]{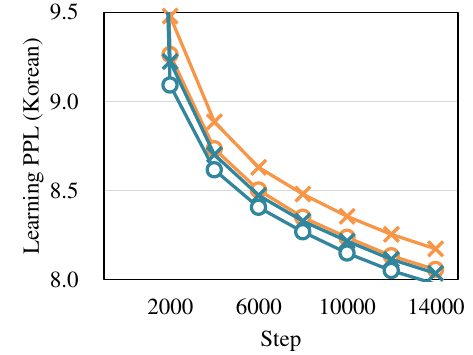}
        \caption{Learning perplexity on test Korean data.}
        \label{fig:init_ppl_kor}
    \end{subfigure}
    \caption{\textbf{Comparison of initialization pairs for $\bW^{21}$ and $\bW^{22}$. A paired name of two methods is delimited by a comma.}}
    \label{fig:init}
\end{figure*}

We also compare initialization pairs for $\bW^{21}$ and $\bW^{22}$ in Figure~\ref{fig:init}. \textbf{0} denotes zero-initialization, \textbf{RND} denotes random initialization\,\cite{he2015delving}, and \textbf{SVD} denotes dimension-reduced SVD of $\bW$ to fit the dimensions of $\bW^{21}$ or $\bW^{22}$, suggested by LESA\,\cite{yang2025lesa}. Interestingly, every initialization pair exhibits near-perfect preservation in Figure~\ref{fig:init_ppl_eng}. On the other hand, adaptation performance consistently maintains the following order in Figure~\ref{fig:init_ppl_kor}: \textbf{0,RND} $<$ \textbf{RND,0} $<$ \textbf{0,SVD} $<$ \textbf{SVD,0}. Therefore, we choose \textbf{SVD,0} as the default initialization strategy for $\bW^{21}$ and $\bW^{22}$ in this paper. Corollary~\ref{cor:irrelevant_init} provides an analysis that supports this finding.
\sukhoon{아직 작성 중인 부분 같지만.. Fig 5 (a)가 주장하는 내용은 이 section이 맞는 것 같습니다. 다만, Fig 5 (b)도 매우 중요한 만큼, 다른 색션을 만들어 설명해야 할 것 같고,}
\jinwoo{$\bW^{21}$와 $\bW^{22}$ 초기화에 대한 영향을 보이는 부분인데 같은 실험에 대해 다른 섹션을 만들어야 할 필요가 있을까요? Zero, Random, SVD 방법론 세 개를 비교하는 것 뿐이라 섹션을 나눌만큼 내용이 많지 않기도 한데, 고민을 해 보겠습니다.}
\sukhoon{SVD에 대한 내용도 설명해야 할 것 같습니다. (타 논문이지만, 개념 만이라도). }
\jinwoo{이 부분은 $\bW^{12}$와 비교했을 때 중요한 내용이 아니라고 생각되는데 논의가 필요해 보입니다. 일단, Appendix 에 내용을 정리하고 안내문구를 넣겠습니다.}

\sukhoon{많은 고민을 하지 않고 읽으면 Fig4의 (A)와 모순인 것 처럼 보입니다. 평가대상이 hidden이 다른것을 넣거나..실험설계 내용을 조금이라도 넣으면 되겠네요. }

\begin{corollary}
\label{cor:irrelevant_init}
$\bW^{21}_{\ell}$ and $\bW^{22}_{\ell}$ can be initialized to other than $\bm{0}$ for new task adaptation without disrupting the original function because $\bW^{21}_{\ell}$ and $\bW^{22}_{\ell}$ values are irrelevant to the function preservation.
\end{corollary}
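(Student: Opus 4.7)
The plan is to obtain the corollary as an immediate consequence of Theorem~\ref{thm:preserv_width_upscale}, reading off exactly which block of the upscaled linear map carries the preserved computation. First I would write out one layer of the expansion as
$$\begin{bmatrix} \bW & \bW^{12} \\ \bW^{21} & \bW^{22} \end{bmatrix}\begin{bmatrix} \bX \\ \bX^{up} \end{bmatrix} = \begin{bmatrix} \bW\bX + \bW^{12}\bX^{up} \\ \bW^{21}\bX + \bW^{22}\bX^{up} \end{bmatrix},$$
and observe that the \emph{original} channel of the output, namely the top block, depends only on $\bW$ and $\bW^{12}$; it contains no occurrence of $\bW^{21}$ or $\bW^{22}$. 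Under the hypothesis of Theorem~\ref{thm:preserv_width_upscale} the top block collapses to $\bW\bX$ regardless of what is stored in the lower row of the block matrix.

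Next I would propagate this observation across the depth of the network. Because Theorem~\ref{thm:preserv_width_upscale} applies uniformly to every $1 \le \ell \le L$, the original channel entering layer $\ell+1$ is already $\bW_{\ell}\bX_{\ell}$, and its top output again does not see $\bX^{up}$. Combined with the fact that the residual stream, the attention score computation over the original head indices, and the output projection in \scale all act only on this preserved top sub-block by construction (Section~\ref{sec:overview}), one concludes by induction on $\ell$ that the composed forward map restricted to the original channel reproduces the pre-trained function exactly. At no point in this induction is the value of $\bW^{21}_{\ell}$ or $\bW^{22}_{\ell}$ used.

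Finally I would close the argument by noting that the same reasoning applies after any gradient update, so not only the initialization but the entire trajectory of $\bW^{21}_{\ell}$ and $\bW^{22}_{\ell}$ is decoupled from function preservation; they are free parameters to be chosen for adaptation, which explains why the \textbf{0}, \textbf{RND}, and \textbf{SVD} variants in Figure~\ref{fig:init_ppl_eng} are all indistinguishable on the English preservation metric while still differing sharply on the Korean adaptation metric in Figure~\ref{fig:init_ppl_kor}.

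I do not expect a substantive obstacle here, since the claim is a reading-off of Theorem~\ref{thm:preserv_width_upscale}. The only mildly delicate step is verifying that no auxiliary pathway, in particular the attention softmax over concatenated heads or the embedding/output projection augmentations $\bW_{in}^{up}$ and $\bW_{out}^{up}$, covertly routes the upscaled channel back into the original output; this reduces to checking that the original heads and the original output projection column act only on the top sub-block, which is a direct consequence of the block structure introduced in Equation~\eqref{eq:upscaled_matmul}.
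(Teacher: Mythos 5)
Your proposal is correct and takes essentially the same route as the paper: the paper's proof is literally ``Immediate from the proof of Theorem~\ref{thm:preserv_width_upscale},'' whose block computation in Eq.~\eqref{eq:preserv_width_upscaling_derive} shows the top (original) output block is $\p{\bW_{\ell}+\bm{I}}\bX_{\ell-1}$ with no occurrence of $\bW^{21}_{\ell}$ or $\bW^{22}_{\ell}$, exactly the observation you make and propagate by recursion over layers. Your additional checks on the attention and embedding/output pathways go beyond the paper's formal setting (its theorem is proved only for the simplified residual-only network of Definition~\ref{def:width_upscaling}), but they do not change the argument.
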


\subsection{Design Principle 2: \emph{Collaborative Adaptation}}
\label{sec:adaptation}

By collaboratively training certain weight blocks, \scale can effectively adapt to new domain knowledge, thus called \emph{Collaborative Adaptation}. We mainly suggest that weight blocks in certain layers or in certain modules be collaboratively trained.


\paragraph{Collaborative Layers}

\begin{figure}[t!]
\centering
    \includegraphics[width=.9\linewidth]{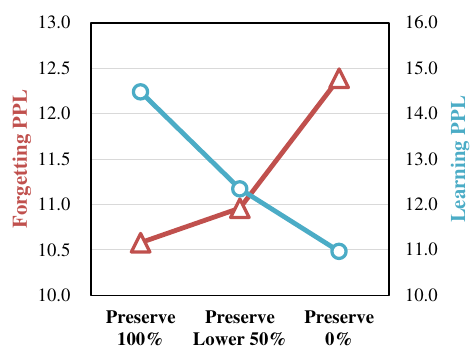}
    \caption{\textbf{Trade-off of forgetting and learning PPL according to the amount of preserving lower layers $L_{fp}$.}}
    \label{fig:L_fp}
\end{figure}
\begin{figure}[t!]
\centering
    \includegraphics[width=.9\linewidth]{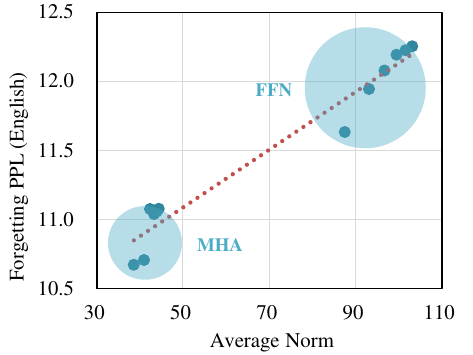}
    \caption{\textbf{Near-linear scalability of forgetting perplexity to upscaled weight norm.}}
    \label{fig:norm_ppl}
\end{figure}

We introduce a preliminary study to show a collaborative property of \scale that leads to a trade-off of forgetting and learning. Since frozen $\bW^{12}$ for the \emph{Persistent Preservation} strictly prevents collaboration between $\bX$ and $\bX^{up}$, it inherently lacks learning capacity. Therefore, we initialize $\bW^{12}$ to be $\bm{0}$ and permit certain amount of $\bW^{12}$ blocks only in upper layers to be trainable, thus collaborating with the original weight block $\bW$ in upper layers while $\bW^{12}$ blocks in lower layers are still preserved. Figure~\ref{fig:L_fp} displays a trade-off of forgetting and learning PPL according to $L_{fp}$ which is the number of function preserving $\bW^{12}$ blocks in lower layers. It should be also noted that forgetting happens exponentially with decreasing $L_{fp}$. Proposition~\ref{pro:preserv_width_upscaling} and Corollary~\ref{cor:L_fp} provide an analysis that supports this finding.

\begin{proposition}
\label{pro:preserv_width_upscaling}
The accumulated output shift of the width-upscaled residual network with function-preserving lower layers $1 \le \ell \le L_{fp}$ and non-preserving upper layers $L_{fp} < \ell \le L$  is bounded by Eq.~\eqref{eq:bound_preserv_width_upscaling}.

\begin{equation}
\begin{multlined}
\norm{ \tilde{\bX}^{UP}_{L} - \bX^{UP}_{L} } \\
\le \p{ L - L_{fp} } \epsilon \p{ 1 + \delta_{np} }^{L - 1} \p{ \frac{1 + \delta_{fp}}{1 + \delta_{np}} }^{L_{fp}} \norm{ \bX^{UP}_{0} }
\end{multlined}
\end{equation}
where $\tilde{\bX}^{UP}_{L}$ denotes updated output of width-upscaled residual network as defined by Definition~\ref{def:updated_width_upscaling} and $\delta_{fp}$ and $\delta_{np}$ denotes upper bound of norm of upscaled weight matrix for function-preserving lower layers and non-preserving upper layers, respectively, under Assumption~\ref{assume:smaller_weight_norm}.
\end{proposition}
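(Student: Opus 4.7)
The plan is to unroll the residual recursion of the width-upscaled network layer by layer and track the discrepancy $\Delta_\ell := \norm{\tilde{\bX}^{UP}_\ell - \bX^{UP}_\ell}$ across two regimes. First I would rewrite each layer transition in the block form of Eq.~\eqref{eq:upscaled_matmul}, absorb the residual connection into a leading identity, and invoke Assumption~\ref{assume:smaller_weight_norm} to extract a per-layer operator-norm factor of $(1+\delta_{fp})$ for the function-preserving lower layers $(\ell \le L_{fp})$ and $(1+\delta_{np})$ for the non-preserving upper layers $(\ell > L_{fp})$. A parallel induction on the same recursion yields the baseline magnitude bound $\norm{\bX^{UP}_{\ell-1}} \le (1+\delta_{fp})^{\min(\ell-1,L_{fp})}(1+\delta_{np})^{\max(0,\ell-1-L_{fp})}\norm{\bX^{UP}_0}$.

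Second, I would split the discrepancy recursion according to the two regimes. For $\ell \le L_{fp}$, Theorem~\ref{thm:preserv_width_upscale} (zero-frozen $\bW^{12}_\ell$) prevents any new injection of discrepancy, so the recursion is purely multiplicative: $\Delta_\ell \le (1+\delta_{fp})\,\Delta_{\ell-1}$, and hence $\Delta_{L_{fp}} = 0$ under preservation-oriented initialization. For $\ell > L_{fp}$, the trainable $\bW^{12}_\ell$ drifts from zero by at most $\epsilon$ (as captured by Definition~\ref{def:updated_width_upscaling}), contributing an additive term $\epsilon \norm{\bX^{UP}_{\ell-1}}$ on top of the multiplicative factor, yielding $\Delta_\ell \le (1+\delta_{np})\,\Delta_{\ell-1} + \epsilon \norm{\bX^{UP}_{\ell-1}}$. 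Solving this linear recursion for $\Delta_L$ produces a sum of exactly $L - L_{fp}$ perturbation terms; uniformly upper-bounding each term by its largest value and collapsing the resulting product into $(1+\delta_{np})^{L-1}\,\p{(1+\delta_{fp})/(1+\delta_{np})}^{L_{fp}}$ gives the stated bound, with the prefactor $L-L_{fp}$ arising simply from counting the summands.

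The main obstacle I expect is the bookkeeping for this telescoping step. A perturbation injected at layer $\ell > L_{fp}$ is amplified by $\prod_{k=\ell+1}^{L}(1+\delta_k)$ while the baseline magnitude entering that layer scales as $\prod_{k=1}^{\ell-1}(1+\delta_k)$, so the per-position contribution depends on $\ell$ in a nontrivial way. Replacing each such per-position product by a single uniform majorant, as the clean closed form requires, needs a careful argument that $(1+\delta_{fp})^{L_{fp}}(1+\delta_{np})^{L-L_{fp}-1}$ dominates every summand, followed by the algebraic rearrangement into the ratio form in the statement. In addition, verifying that Assumption~\ref{assume:smaller_weight_norm} controls the norms of $\bW^{12}$, $\bW^{21}$, and $\bW^{22}$ uniformly across both MHA and FFN pathways, so that a single pair $(\delta_{fp}, \delta_{np})$ suffices per regime, is where I anticipate the bulk of the technical care.
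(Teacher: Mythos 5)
Your proposal is correct and follows essentially the same route as the paper: the paper expands the difference of the layer products from $\bX^{UP}_{L_{fp}}$ onward, applies a first-order approximation in $\Delta\bW^{UP}_{\ell}$ (the same step your recursion $\Delta_\ell \le (1+\delta_{np})\Delta_{\ell-1} + \epsilon\norm{\bX^{UP}_{\ell-1}}$ performs implicitly by dropping the $\epsilon\,\Delta_{\ell-1}$ cross term), and bounds each of the resulting $L-L_{fp}$ summands by submultiplicativity. The bookkeeping you flag as the main obstacle is in fact immediate: the amplification factor $(1+\delta_{np})^{L-\ell}$ and the baseline growth $(1+\delta_{fp})^{L_{fp}}(1+\delta_{np})^{\ell-1-L_{fp}}$ multiply to an $\ell$-independent quantity, so every summand is bounded by the same majorant and no dominance argument is needed.
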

\begin{proof}
We defer the proofs in this section to Appendix \ref{sec:forgetting_analysis}.
\end{proof}

\begin{corollary}
\label{cor:L_fp}
Forgetting increases exponentially with decreasing $L_{fp}$, number of function preserving $\bW^{12}_{\ell}$ blocks in lower layers $1 \le \ell \le L_{fp}$.
\end{corollary}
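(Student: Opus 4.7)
The plan is to derive Corollary~\ref{cor:L_fp} as a direct monotonicity consequence of Proposition~\ref{pro:preserv_width_upscaling}. First, I would invoke Assumption~\ref{assume:smaller_weight_norm} to conclude that $\delta_{fp} \le \delta_{np}$: since function-preserving lower layers have $\bW^{12}_{\ell}=\bm{0}$, their upscaled weight matrix admits a strictly smaller operator-norm upper bound than that of the non-preserving upper layers. Consequently the geometric ratio $r := \p{1+\delta_{fp}}/\p{1+\delta_{np}}$ satisfies $r \le 1$.

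Next, I would read the right-hand side of Eq.~\eqref{eq:bound_preserv_width_upscaling} as a function of $L_{fp}$ alone, holding $L,\epsilon,\delta_{fp},\delta_{np}$ and $\norm{\bX^{UP}_{0}}$ fixed, and write it compactly as
\begin{equation*}
B\p{L_{fp}} \;=\; C \cdot \p{L - L_{fp}} \cdot r^{L_{fp}}, \qquad C := \epsilon\,\p{1+\delta_{np}}^{L-1}\norm{\bX^{UP}_{0}}.
\end{equation*}
Because $r \le 1$, reducing $L_{fp}$ by one multiplies the geometric factor $r^{L_{fp}}$ by $1/r \ge 1$, while the linear prefactor $\p{L - L_{fp}}$ grows by one unit as well. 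Both factors cooperate to make $B\p{L_{fp}}$ monotonically non-increasing in $L_{fp}$, and the dominant behaviour is the geometric term scaling as $r^{-\Delta L_{fp}}$, i.e.\ exponentially in the number of preservation blocks that have been removed. Since the forgetting proxy $\norm{\tilde{\bX}^{UP}_{L}-\bX^{UP}_{L}}$ is upper bounded by $B\p{L_{fp}}$, forgetting can grow at a geometric rate as $L_{fp}$ shrinks, matching the shape empirically observed in Figure~\ref{fig:L_fp}.

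The main obstacle is that Corollary~\ref{cor:L_fp} is phrased as a statement about forgetting itself, whereas Proposition~\ref{pro:preserv_width_upscaling} only controls an upper bound. Strictly speaking the above argument shows only that the bound grows exponentially; concluding the same for forgetting itself would require either saturating the triangle inequalities used inside the proposition or exhibiting a matching lower bound under an adversarial choice of upscaled weights. I would resolve this either (i) by rephrasing the corollary as a sharpness claim about the bound and citing Figure~\ref{fig:L_fp} as empirical evidence that the predicted rate is realized in practice, or (ii) by constructing a small worst-case instance — e.g.\ a single non-preserving layer with an adversarial choice of $\bW^{21},\bW^{22}$ — which attains the bound up to constants, thereby certifying that the exponential scaling is intrinsic rather than an artefact of triangle-inequality slack.
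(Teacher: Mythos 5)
Your proposal is correct and takes essentially the same approach as the paper: the paper likewise treats the corollary as an immediate monotonicity consequence of Eq.~\eqref{eq:bound_preserv_width_upscaling}, using $\delta_{fp} < \delta_{np}$ from Assumption~\ref{assume:smaller_weight_norm} so that the ratio $(1+\delta_{fp})/(1+\delta_{np})$ is below one and the bound grows geometrically as $L_{fp}$ decreases, degenerating to the unpreserved bound at $L_{fp}=0$ and vanishing at $L_{fp}=L$. Your caveat that this establishes exponential growth only of the \emph{upper bound} rather than of forgetting itself is well taken; the paper's own remark inside the proof of Proposition~\ref{pro:preserv_width_upscaling} in fact speaks of the ``forgetting bound'' increasing exponentially, which is exactly the weaker statement you identify, so your suggested rephrasing (or a matching lower-bound construction) would tighten the corollary beyond what the paper provides.
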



\sukhoon{$L_{fp}$가 인덱스 숫자인지, 블록인지 헷갈립니다.}\jinwoo{인덱스 임을 분명히 구분하기 위해 $\ell_{fp}$로 변경했습니다.} 
\sukhoon{아 대소문자 보다는 윗쪽 "the number of function preserving $\bW^{12}$ blocks $L_{fp}$ in lower layers" 에서 제가 ${L}_{fp}$ 를 블록으로 이해했기 때문에 헷갈린 점이 있습니다. 아래 thus a single non-preserving layer $L_{np}$ 부분도 블록(레이어)으로 표현되는것 같습니다. $L_{L_{np}}$ 이 맞는걸까..? 확신은 없네요. 나중에 정리하시죠. 우선 개념과 전체 구성이 중요하니.}
\jinwoo{그러면 일단 대소문자는 원복했습니다. 그리고 헷갈리는 부분을 해소하기 위해 우선 $L_{np}$ 관련 부분을 모두 제거했습니다. (프로젝트 진행당시 실험했던 개념인데, 이제는 논문 맥락상 필요가 없는 것 같습니다.), 그리고 $L_{fp}$ 관련 설명을 더 구체화 했습니다 (그림 포함).}




\paragraph{Collaborative Modules}

In addition to collaborative layers, we further look into which module is more effective in preservation and collaboration. Specifically, based on function-preserving lower half layers, i.e., $L_{fp}=L/2$, we train two cases for an epoch with $\bW^{12}$ blocks in either MHA or FFN module to be trainable, thus collaborating with $\bW$ blocks only in the module's upscaled weights. Figure~\ref{fig:norm_ppl} shows that both cases in total exhibit near-linear scalability of forgetting perplexity to average norm of all upscaled weights. The case with FFN module exhibits high forgetting perplexity due to its large intermediate dimension size and, considering baseline forgetting perplexity of LLaMA-3.2-1B around 10.4, the converged perplexity suggests it is not a good choice. On the other hand, the other case with MHA module converges within smaller error bound and exhibits only slightly higher perplexity compared to the baseline, which makes it a good choice for collaborative module.

\sukhoon{실험설명이 좀더 되야 할 것 같습니다. 특히 Avg Norm을 vary하는 컨트롤 요소가 뭐인지..  점들의 점들의 avg norm 간격이 일정하지 않아서, 단순 레이어 단위로 trainable 설정한 것 같지는 않고.. average면 sum/N 인데 N이 뭘까.. 윗쪽 exponential fogetting 과 이곳 linear가 서로 상충 되지 않는 다른 trainable 설정인걸까..}



\section{Proposed Learning Methods}
\label{sec:scalelearning}

\emph{Persistent Preservation} and \emph{Collaborative Adaptation}, two design principles of \scale, suggest complementary methods for upscaled continual learning. Additionally, taking only the advantages of both methods could possibly improve preservation and adaptation simultaneously. To this end, we propose three novel learning methods: ① \scalepreserve, ② \scaleadapt,  and ③ \scaleroute.

\begin{itemize}
\item \scalepreserve is the preservation-first method such that $\forall \ell$, $\bW^{12}_{\ell}$ is initialized to $\bm{0}$ and not trainable, as defined by Eq.~\eqref{eq:Z_preserve}.
\begin{equation}
\label{eq:Z_preserve}
\bZ^{UP}_{preserve} \triangleq \bZ_{preserve} + \bZ^{up}_{preserve}
\end{equation}
where $\bZ=\bW_{out} \bX_{L}$ and $\bZ^{up}=\bW^{up}_{out} \bX^{up}_{L}$ denote output logits for original and upscaled part, respectively.

\item \scaleadapt, on the other hand, is the adaptation-first method such that $\forall \ell$, $\bW^{12}_{\ell}$ is initialized to $\bm{0}$ and trainable, as defined by Eq.~\eqref{eq:Z_adapt}.
\begin{equation}
\label{eq:Z_adapt}
\bZ^{UP}_{adapt} \triangleq \bZ_{adapt} + \bZ^{up}_{adapt}
\end{equation}

\item \scaleroute is the advantages-of-both method that aims at maximizing preservation and adaptation simultaneously and thus mitigating the performance trade-off illustrated in Figure~\ref{fig:L_fp}. Note that the core difference between preservation and adaptation of \scale lies in trainability of the upscaled weights $\bW^{12}$. Therefore, delicate utilization of trainable computation paths is key to the optimal performance. \scaleroute routes the best computation paths to either \scalepreserve or \scaleadapt, as defined by Eq.~\eqref{eq:Z_route}.
\begin{equation}
\label{eq:Z_route}
\bZ^{UP}_{route} \triangleq \begin{dcases}
    \begin{multlined}
        \p{ \bZ_{preserve} + \bZ_{adapt} } / 2 + \bZ^{up}_{preserve} \\
        \text{if cosine}\p{\bZ_{preserve}, \bZ_{adapt}} > \uptau
    \end{multlined} \\
    \bZ^{UP}_{adapt} \hfill \text{otherwise}
\end{dcases}
\end{equation}
where cosine$\p{\cdot}$ denotes cosine similarity between two logits, which acts as a router controlled by minimum threshold $\uptau$. In brief, if logits of a token for $\bZ_{preserve}$ and $\bZ_{adapt}$ are similar, it routes the token to \scalepreserve. Otherwise, it takes an adaptation opportunity by routing the token to \scaleadapt. $\bZ_{preserve}$ is approximated by $\p{\bZ_{preserve} + \bZ_{adapt}}/2$ due to better trainability of $\bZ_{adapt}$. Note that according to Section \ref{sec:preservation}, the computation paths of \scaleadapt override those of \scalepreserve, thereby allowing \scaleadapt to produce logits of \scalepreserve as long as $\bW^{12}$ being computed as $\bm{0}$ matrix. Therefore, it still requires a single forward pass with slight extra computation to produce both logits at any training step.
\end{itemize}

Theorem~\ref{thm:logit_routed_cl} provides an analysis that supports the superiority of \scaleroute.

\begin{theorem}
\label{thm:logit_routed_cl}
\emph{Routing-based Continual Learning} achieves lower convergence than standard \emph{Continual Learning}.
\end{theorem}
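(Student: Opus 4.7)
The plan is to recast both methods as stochastic optimization over the trainable expansion parameters and then apply the standard descent-lemma convergence template under $L$-smoothness of the per-token cross-entropy loss. For standard continual learning I would start from the population objective $\mathcal{L}_{CL}(\theta)=\gE\bkt{\ell(\bZ_{adapt}(\theta;x),y)}$ and invoke the classical SGD bound $\min_{t\le T}\gE\norm{\g\mathcal{L}_{CL}(\theta_t)}^2 \le C_1 (\mathcal{L}_{CL}(\theta_0)-\mathcal{L}_{CL}^*)/T + C_2 L\sigma_{CL}^2/B$, where $\sigma_{CL}^2$ is the stochastic gradient variance and $B$ the batch size. For \scaleroute the same template applies to the router-split objective $\mathcal{L}_{route}$ defined by Eq.~\eqref{eq:Z_route}, but with replaced constants that depend on the fraction of tokens selected by the cosine threshold $\uptau$.

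The key inequality to establish is that routing reduces both the Lipschitz and variance constants. By Theorem~\ref{thm:preserv_width_upscale}, $\bZ_{preserve}$ coincides with the frozen base-model logit for any setting of the trainable expansion, so its gradient with respect to $\theta$ vanishes. Consequently, when a token is routed to the preservation branch (cosine $>\uptau$), the gradient contribution reduces to a halved copy of the adaptation gradient; when routed to the adaptation branch, the full gradient is preserved. This yields $\sigma_{route}^2\le\sigma_{CL}^2$ and a sharper effective smoothness constant $L_{route}\le L$, because the dampened contributions are precisely those whose adaptation logits already agree with the base model and therefore carry the least informative signal for new-domain acquisition---an intuition I would make quantitative by bounding the masked adaptation-path gradient norms via Proposition~\ref{pro:preserv_width_upscaling}. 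Plugging the tighter constants into the descent template then delivers a strictly smaller convergence bound for \scaleroute.

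The main obstacle will be handling the non-differentiability and $\theta$-dependence of the cosine router. I plan to sidestep it in two steps. First, I would treat the indicator as a stop-gradient partition of each mini-batch---legitimate because back-propagation only flows through the selected branch---so the resulting stochastic gradient is an unbiased estimator of $\g\mathcal{L}_{route}$ up to a bias term controlled by the rate at which routing decisions flip near the threshold. Second, under a mild Lipschitz-router assumption, or by shrinking the step size, this bias vanishes faster than the dominant variance term and the descent lemma is preserved. A secondary subtlety is ensuring the inequality is strict, which follows from any mild non-degeneracy assumption guaranteeing the routing mask is non-trivial with positive probability---a condition empirically supported by the trade-off curves in Figure~\ref{fig:L_fp}.
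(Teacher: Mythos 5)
Your proposal takes a genuinely different route from the paper, but as written it has a gap that I do not think can be patched without changing the argument. The paper does not analyze the descent lemma on a single objective at all. It formalizes standard CL as sequential SGD over a task sequence $T^1,\dots,T^N$, introduces a \emph{virtual global task} trained jointly on $\gD^G=\cup_i\gD^i$, and bounds the weight divergence $\norm{\w^i_t-\w^G_t}$ via the recursion $\norm{\w^i_t-\w^G_t}\le(\eta\beta+1)\norm{\w^i_{t-1}-\w^G_{t-1}}+\eta\norm{\g F^i(\w)-\g F^G(\w)}$. The driving quantity is the task-vs-global gradient divergence, which (after rewriting the NLL loss as a class-weighted expectation) is controlled by the class-distribution mismatch $\abs{\gP_{ic}-\gP_{Gc}}$; routing helps because re-partitioning samples into groups is assumed to shrink this mismatch groupwise. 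In other words, the paper's ``convergence'' is convergence of the sequentially trained weights toward the jointly trained ones --- a data-heterogeneity argument in the style of federated-learning analyses --- not the speed at which a fixed loss is driven to stationarity.

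Your argument never touches this multi-task structure: you compare an SGD bound for $\mathcal{L}_{CL}(\theta)=\gE\bkt{\ell(\bZ_{adapt})}$ against one for $\mathcal{L}_{route}$, which are \emph{different objective functions} with different minimizers, different initial suboptimality gaps, and gradients of different functions appearing on the left-hand sides. Showing $\sigma_{route}^2\le\sigma_{CL}^2$ and $L_{route}\le L$ therefore does not yield a comparison of anything the two methods share; a smaller bound on $\min_t\norm{\g\mathcal{L}_{route}(\theta_t)}^2$ says nothing about $\mathcal{L}_{CL}$ or about retention across tasks. The variance claim itself is also asserted rather than proved: halving the per-token gradient on the preservation-routed subset rescales the mean as well as the second moment, so the variance about the new mean need not decrease, and the ``least informative signal'' heuristic is not something Proposition~\ref{pro:preserv_width_upscaling} (an output-shift bound, not a gradient-norm bound) will give you. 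To salvage your route you would need to either (i) relate the stationary points of $\mathcal{L}_{route}$ and $\mathcal{L}_{CL}$ so the bounds become commensurable, or (ii) switch to the paper's divergence-from-joint-training criterion, at which point the relevant lever is the distributional re-partitioning induced by the router, not the smoothness or noise constants. To be fair, the paper's own final step also rests on an unproven ``well-routed'' assumption ($\forall g,\ \abs{\gP_{igc}-\gP_{Ggc}}\le\gP_{ic}-\gP_{Gc}$), so neither argument is fully rigorous --- but yours is currently proving a different (and not obviously well-posed) statement.
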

\begin{proof}
The proof is deferred to Appendix \ref{sec:convergence_analysis}.
\end{proof}

\section{Experiments}
\label{sec:experiments}

\subsection{Experimental Setup}

\begin{table}[t!]
    \centering
    \begin{tabular}{lcc}
        \hline
        Model & Learning rate \\
        \hline
        \fft & $1 \times 10^{-5}$ \\
        \lora & $1 \times 10^{-5}$ \\
        \freeze & $1 \times 10^{-5} $ \\
        \llamapro & $2 \times 10^{-4}$ \\
        \scale & $1 \times 10^{-3}$ \\
        \hline
    \end{tabular}
    \caption{\textbf{Learning rates of continual pre-training on the Korean dataset.}}
    \label{tab:main-setting}
\end{table}

\paragraph{The Biography Dataset}
We reproduce the controlled experiment conducted in \cite{zheng2025spurious} to compare forgetting between existing continual learning methods and our proposed methods, especially \scaleroute. The experimental dataset, \textit{Biography Dataset}\footnote{https://github.com/zzz47zzz/spurious-forgetting}, consists of 200,000 synthetic individuals, each characterized by a name and six attributes: birthday, birth city, university attended, major, company name, and company city. For each individual, the dataset is divided into pre-training and fine-tuning data. As in \cite{zheng2025spurious}, our continual learning setting is constructed through three stages. The model is initially pre-trained on the first 100,000 individuals of the synthetic Biography Dataset, followed by fine-tuning on QA data corresponding to the first 50,000 individuals\,(Task~0). We then apply an upscaling method and fine-tune new QA data from a previously unseen 20,000 individuals\,(Task~1). During learning Task 1, we monitor the decline of the model's performance on Task 0, using \textit{hard first-token accuracy}, a metric that measures whether the model's top-predicted first token matches the correct token. 

We utilize the Pythia-160M \cite{biderman2023pythia} architecture as our backbone model. For \scaleroute, we upscale both the hidden dimension and the FeedForward dimension by 128, and train $\bW^{12}$ only for the last 12th layer. In order to match the number of trainable parameters in \scaleroute, \llamapro expands the number of layers from 12 to 16. We note that \llamapro relies on the LLaMA architecture, where the output weight matrices are zero-initialized in the expanded block to preserve the output from the initial model. In contrast, since Pythia adopts a GPT-NeoX \cite{gpt-neox-library} architecture, a different set of the output weight matrices should be zero-initialized. We use the same hyperparameter settings as in \cite{zheng2025spurious}, except that, during Task 1 learning, \scaleroute and \llamapro are trained with an increased learning rate of $5 \times 10^{-5}$, which is ten times larger than the original setting. This modification is necessary because
both methods freeze a substantial portion of the parameters, and therefore an increased learning rate is required to ensure the performance on Task 1. The experiments are executed on an NVIDIA H100 80GB GPU.



\paragraph{Continual Pre-training}
In order to investigate forgetting phenomena, we constrain the training data to the Korean subset of FineWeb2 \cite{penedo2025fineweb2pipelinescale}, a 60-billion-token Korean web data filtered from Common Crawl. We deliberately exclude data from other domains, since the presence of English corpus can induce data-replay effects, which in turn hinder a precise comparison among upscaling methods.

For each method, we initialize our base model with LLaMA3.2-1B and perform continual pre-training on the Korean dataset for one epoch, using a batch size of 512, a sequence length of 8192, and a linear learning rate schedule with a warm-up ratio of 6\%. Our \scale methods upscale both the hidden dimension and the FeedForward dimension by 256 and 1024, respectively. We note that the base model is configured with a head dimension of 64 and uses the Grouped-Query Attention(GQA) with 4 KV projections, and therefore upscaling the hidden dimension by 256 represents a minimal upscaling. For \scaleadapt and \scaleroute, we choose $L_{fp} = 3$. We also manually configure the hyperparameters of \llamapro and \lora to match the number of trainable parameters in \scale methods. We expand the number of layers from 16 to 20 for \llamapro, and use a rank of 256 and target all weight matrices in the MHA and FFN for \lora. \freeze \cite{zheng2025spurious} refers to freezing all components in the bottom three layers of the model, including the input embedding layer. Finally, we set different learning rates due to the trade-off between learning and forgetting. As shown in Table~\ref{tab:main-setting}, we adjust the learning rate individually for each experiment to achieve comparable learning performance, allowing us to fairly compare model's forgetting under similar learning conditions. For \freeze, we employ the same learning rate of \fft, as in the original paper. All experiments are executed on 8 NVIDIA H100 80GB GPUs.

\begin{figure*}[t!]
    \centering
    \begin{subfigure}[b]{.34\textwidth}
        \centering
        \includegraphics[width=\linewidth]{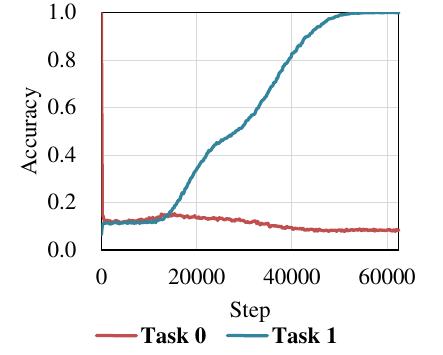}
        \caption{\fft.}
        \label{fig:biography_fft}
    \end{subfigure}
    \begin{subfigure}[b]{.315\textwidth}
        \centering
        \includegraphics[width=\linewidth]{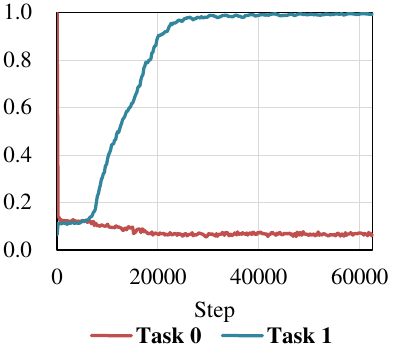}
        \caption{\llamapro.}
        \label{fig:biography_llamapro}
    \end{subfigure}
    \begin{subfigure}[b]{.315\textwidth}
        \centering
        \includegraphics[width=\linewidth]{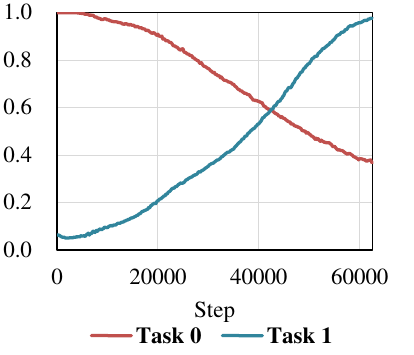}
        \caption{\scaleroute.}
        \label{fig:biography_scale}
    \end{subfigure}
    \caption{\textbf{Continual adaptation to biography Task 1 while preserving Task 0.}}
    \label{fig:biography_trend}
\end{figure*}

\subsection{Results and Analysis}

\paragraph{The Biography Dataset Results}
For the Biography Dataset experiment, we present the accuracy for Task 0 and Task 1 during Task 1 learning in Figure~\ref{fig:biography_trend}. We observe that for \fft and \llamapro, the accuracy for Task 0 sharply drops to approximately 15\% only after 200 steps, whereas for \scaleroute it remains at 100\% throughout the first 4000 steps of Task 1 learning. Furthermore, for \scaleroute the final accuracy for Task 0 is 36.9\% which is much higher compared to \fft and \llamapro, highlighting its robustness against forgetting.

Another notable observation is that in Figure~\ref{fig:biography_scale}, the accuracy curves of \scaleroute for Task 0 and Task 1 gradually decrease and increase, respectively, showing smooth transitions without drops or spikes during Task 1 learning. This indicates that by varying the number of collaborative layers, the trade-off between forgetting and learning can be controlled in our architecture-based method, rather than a data replay-based method, aligning it with its learning objective.

\paragraph{Continual Pre-Training Results}
We first analyze the perplexity on 30K samples of FineWeb-Edu and on the test split of the Korean subset of FineWeb2. As shown in Figure~\ref{fig:main_ppl_kor}, the perplexity on the Korean test data is almost identical across all methods except \scalepreserve, which is consistent with our intended design. In contrast, Figure~\ref{fig:main_ppl_eng} shows that our \scale methods achieve lower perplexity on the English test data than the other methods. Compared to \llamapro, \scale has the lower increase of perplexity on the English test data in early stage of training, and therefore it can be regarded as a more stable approach for Continual Pre-Training. We also observe that the discrepancy of the perplexity between \scalepreserve and \scaleadapt arises more from learning than from forgetting, proving that training $\bW^{12}$ more strongly affects learning than forgetting.

\begin{figure*}[t!]
    \centering
    \includegraphics[width=\textwidth]{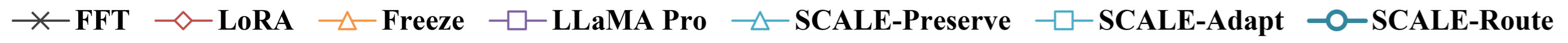} \\
    \begin{subfigure}[b]{.45\textwidth}
        \centering
        \includegraphics[width=\linewidth]{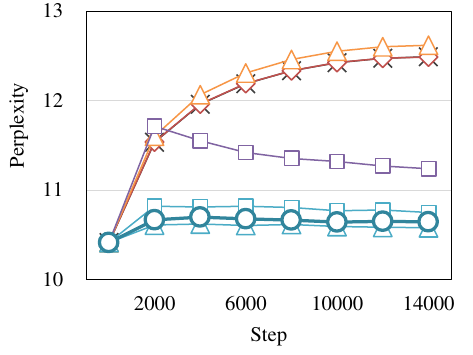}
        \caption{Forgetting perplexity on test English data.}
        \label{fig:main_ppl_eng}
    \end{subfigure}
    \begin{subfigure}[b]{.45\textwidth}
        \centering
        \includegraphics[width=\linewidth]{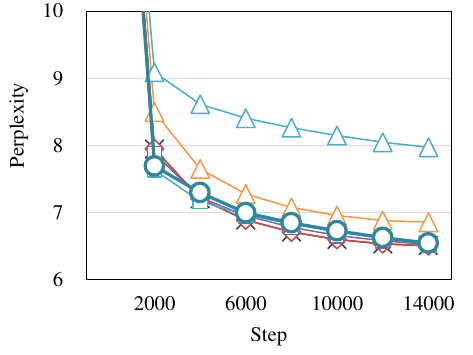}
        \caption{Learning perplexity on test Korean data.}
        \label{fig:main_ppl_kor}
    \end{subfigure}
    \caption{\textbf{Performance comparison of upscaling methods trained on the Korean subset of FineWeb2 for one epoch.}}
    \label{fig:main}
\end{figure*}

Furthermore, we evaluate our \scale methods with the English and Korean benchmarks. Evaluations are conducted using Eleuther AI Language Model Evaluation Harness\footnote{https://github.com/EleutherAI/lm-evaluation-harness}
and in a zero shot setting. Due to the lack of instruction-following capability of pre-trained models, we select only a very limited set of Korean benchmarks, KoBEST \cite{jang2022kobest}, for evaluation. The results are presented in Table~\ref{tab:main-performance}. We find that all \scale methods preserve the original capabilities on English benchmarks, outperforming other methods. Although \scaleroute obtains some improvement on the Korean benchmarks, it achieves only marginal improvement compared to \fft and \lora. However, as \scaleadapt and \scaleroute outperform \scalepreserve on the Korean benchmarks, expanding the training scope of $\bW^{12}$ could yield further improvements while preserving the original capabilities.

\begin{table*}[t!]
    \centering\resizebox{\textwidth}{!}{
    \begin{tabular}{lccccccccccc}
        \hline
        Model & \multicolumn{6}{c}{English} &  & \multicolumn{4}{c}{Korean} \\
        \cline{2-7} \cline{9-12}
         & ARC & HellaSwag & MMLU & TruthfulQA & Winogrande & Avg. &  & KB BoolQ & KB COPA & KB HellaSwag & Avg. \\
        \hline
        Llama-3.2-1B & 36.60 & 63.66 & 36.76 & 37.73 & 60.93 & 47.14 &  & 49.86 & 53.00 & 50.60 & 51.15 \\
        \hline
        \fft & 32.68 & 57.32 & 26.01 & 38.82 & 59.12 & 42.79 &  & 52.14 & 65.30 & 54.20 & 57.21 \\
        \lora & 32.85 & 57.17 & 25.76 & 38.99 & 58.17 & 42.59 & & 51.85 & 64.90 & 55.00 & 57.25 \\
        \hline
        \llamapro & 34.22 & 60.61 & 34.42 & 36.20 & 57.30 & 44.55 &  & 51.42 & 63.20 & 52.80 & \textbf{55.81} \\
        \freeze & 31.23 & 56.59 & 26.43 & 38.67 & 59.27 & 42.44 & & 50.50 & 60.10 & 53.40 & 54.67 \\
        \scalepreserve & 36.52  & 62.75 & 33.79 & 37.89 & 59.51 & \textbf{46.09} & & 52.42 & 57.60 & 49.40 & 53.14 \\
        \scaleadapt & 34.81 & 61.85 & 35.22 & 38.25 & 60.62 & \textbf{46.15} &  & 50.36 & 63.10 & 51.80 & \textbf{55.09} \\
        \scaleroute & 35.84 & 61.72 & 36.31 & 37.50 & 61.09 & \textbf{46.49} & & 51.50 & 63.80 & 51.20 & \textbf{55.50} \\
        \hline
    \end{tabular}
    }
    \caption{\textbf{Performance comparison of upscaling methods trained on the Korean subset of FineWeb2 for one epoch.}}
    \label{tab:main-performance}
\end{table*}
\section{Conclusion}
This work presented \scale, a practical recipe for architectural expansion that enables stable continual pre-training without altering a model’s core computation graph. By growing width inside linear submodules and freezing the original parameters, \scale adds representational capacity while keeping the pre-trained function intact. Two design principles—\emph{Persistent Preservation} and \emph{Collaborative Adaptation}—let practitioners dial in the desired balance: preservation-oriented setups deliver extreme resistance to forgetting, while collaborative setups unlock strong plasticity by training a targeted subset of expansion components, preferably in upper layers and within attention when stability is paramount.

We instantiated these principles through three learning variants. \scalepreserve offers a preservation-first baseline with strong stability. \scaleadapt emphasizes plasticity by enabling collaboration throughout the network. \scaleroute combines both by routing tokens between preservation and adaptation paths using a similarity criterion, exposing both behaviors in a single forward pass with small overhead. Empirically, width upscaling outperforms depth expansion on the biography task, achieving significantly higher retention and competitive adaptation. In continual pre-training on Korean web data, \scale variants reduce forgetting on English evaluations while matching or improving target-language learning; among them, \scaleroute consistently achieves the best stability–plasticity balance. Theoretical results clarify why preservation holds under our initialization and freezing scheme and why routing confers a convergence advantage.

Limitations include evaluation at modest scales and on a constrained set of domains, a simple static routing threshold, and conservative choices about which layers and modules collaborate. Future work includes scaling to larger backbones and longer training horizons, adaptive selection of collaborative layers and modules, richer routing policies, integration with parameter-efficient tuning and retrieval, and broader multilingual and domain-specialized CPT. Together, these directions position width-upscaled continual learning as a reliable path beyond brute-force scaling.

\bibliography{7_references}

\appendix

\section{Width Upscaling}
\label{sec:width_upscaling}

We consider a simplified decoder model that retains residual connections while omitting modules such as MHA\,(multi-head attention) or FFN\,(feed-forward networks). This abstraction enables us to concentrate on the dynamics of residual layers, particularly their roles in balancing \textbf{preservation} and \textbf{adaptation} of representations. Such model reduction is consistent with a broader line of theoretical work that introduces simplifications to isolate core mechanisms, as highlighted in Remark~\ref{remark:abstraction}
\begin{remark}
\label{remark:abstraction}
Theoretical analysis of the full Transformer architecture is notoriously challenging due to the intricate interaction of its components. Recent studies have therefore adopted simplified settings---such as single-layer or shallow Transformers\,\cite{li2023theoretical}, restricted weight structures\,\cite{boix2023transformers}, and attention-free or linearized variants\,\cite{ahn2023linear,zhai2021attention}---to obtain tractable insights. Our focus on a residual-only decoder follows this tradition, motivated by growing evidence that residual connections form the backbone of stable signal propagation and representation transport in Transformers\,\cite{he2023deep,qin2025convergence,deb2025information}
\end{remark}

\begin{definition}
\label{def:residual}
(Residual Network) \\
\begin{equation}
\bX_{\ell} \triangleq \p{ \bW_{\ell} + \bm{I} } \bX_{\ell-1} \quad \p{ 1 \leq \ell \leq L }
\end{equation}
where each $\ell$-th layer outputs hidden states $\bX_{\ell} \in \mathbb{R}^{d}$ and has a weight matrix $\bW_{\ell} \in \mathbb{R}^{d \times d}$, $\bX_0$ denotes embedding input, and $\bm{I} \in \mathbb{R}^{d \times d}$ denotes the identity matrix.
\end{definition}

Next, we upscale the width of the residual network, as defined in Definition \ref{def:width_upscaling}. It follows general formulations in upscaled learning literature\,\cite{stacking,shen2022staged,samragh2024scaling}.

\begin{definition}
\label{def:width_upscaling}
(Residual Network with Width Upscaling)

\begin{equation}
\label{eq:width_upscaling}
\bX^{UP}_{\ell} \triangleq \p{ \bW^{UP}_{\ell} + \bm{I}^{UP} } \bX^{UP}_{\ell-1} \quad \p{ 1 \leq \ell \leq L }
\end{equation}
where $\bX_{\ell} \,\p{ 0 \leq \ell \leq L }$ is upscaled on dimension of width to
$\bX^{UP}_{\ell} \triangleq \begin{bmatrix}
\bX_{\ell} \\
\bX^{up}_{\ell}
\end{bmatrix}
\in \mathbb{R}^{ \p{ d+d_{up} } }$
, $\bW_{\ell} \,\p{ 1 \leq \ell \leq L }$ is upscaled accordingly to
$\bW^{UP}_{\ell} \triangleq \begin{bmatrix}
\bW_{\ell} & \bW^{12}_{\ell} \\
\bW^{21}_{\ell} & \bW^{22}_{\ell}
\end{bmatrix}
\in \mathbb{R}^{ \p{ d+d_{up} } \times \p{ d+d_{up} } }$, and $\bm{I}$ is also upscaled to
$\bm{I}^{UP} \triangleq \begin{bmatrix}
\bm{I} & \bm{0} \\
\bm{0} & \bm{I}^{up}
\end{bmatrix}
\in \mathbb{R}^{ \p{ d+d_{up} } \times \p{ d+d_{up} } }$.

\end{definition}

\section{Function-Preserving Width Upscaling}
\label{sec:preserv_width_upscaling}

\textbf{Function preservation} serves as a foundational principle in model upscaling\,\cite{shen2022staged, wang2023learning, yao2023masked,samragh2024scaling,qin2022elle,han2025loire}, aiming to ensure that the functional behavior of the model remains unchanged despite architectural modifications. 

Formally, $F$ denotes the original function\,(e.g., an end-to-end function or just a layer), taking $\bX$ as the input\,(e.g., input tokens or input hidden states). By transforming or expanding $F\p{ \bX }$, we can upscale $F: \bX \to \bm{Y}$ to $\mathcal{F}: \bX \times \bm{U} \to \bm{Y} \times \bm{V}$. Then, the objective of function preservation is to satisfy Eq.~\eqref{eq:FuncPreserve}.

\begin{equation}
\label{eq:FuncPreserve}
\forall \bX, \pi_{\bm{Y}} \p{ \mathcal{F}\p{ \bX, \bm{U} } } = F\p{ \bX }
\end{equation}
where $\pi_{\bm{Y}}$ denotes a projection function to $\bm{Y}$. That is, $F$ is preserved in $\mathcal{F}$.

This formulation enables model upscaling by allowing new knowledge to be learned, while ensuring that the original knowledge remains unforgotten.

To satisfy function preservation in the width-upscaled residual network in Definition \ref{def:width_upscaling}, it is the simplest to set $\bW^{12}_{\ell}$ to $\bm{0}$, as defined in Definition \ref{def:preserv_width_upscaling}.

\begin{definition}
\label{def:preserv_width_upscaling}
(Residual Network with Function-Preserving Width Upscaling).
\begin{equation}
\label{eq:preserv_width_upscaling}
\bX^{UP}_{\ell} \triangleq \p{ \begin{bmatrix}
\bW_{\ell} & \bm{0} \\
\bW^{21}_{\ell} & \bW^{22}_{\ell}
\end{bmatrix} + \bm{I}^{UP} } \bX^{UP}_{\ell-1} \quad \p{ 1 \leq \ell \leq L }
\end{equation}
\end{definition}

\noindent \textbf{Theorem~\ref{thm:preserv_width_upscale}.} \textit{
Width-upscaled network with $\bW^{12}_{\ell}$ set to $\bm{0}$ preserves the original function $\bX_{\ell} = \p{ \bW_{\ell} + \bm{I} } \bX_{\ell-1}$ for all layers $1 \leq \ell \leq L$.}

\begin{proof}
From Eq.~\eqref{eq:width_upscaling}, we can express the original function as $F\p{ \bX_{\ell-1} } = \p{ \bW_{\ell} + \bm{I} } \bX_{\ell-1}$ and its width-upscaled function as $\mathcal{F}\p{ \bX^{UP}_{\ell-1} } = \begin{bmatrix}
\bW_{\ell} + \bm{I} & \bW^{12}_{\ell} \\
\bW^{21}_{\ell} & \bW^{22}_{\ell} + \bm{I}^{up}
\end{bmatrix} \begin{bmatrix}
\bX_{\ell-1} \\
\bX^{up}_{\ell-1}
\end{bmatrix}$.
By setting $\bW^{12}_{\ell}$ of $\mathcal{F}\p{ \bX^{UP}_{\ell-1} }$ to $\bm{0}$, it can be easily shown by Eq.~\eqref{eq:preserv_width_upscaling_derive} that Eq.~\eqref{eq:preserv_width_upscaling} satisfies function preservation because for all $\bX^{UP}_{\ell-1}$, there exists a projection function $\pi_{\bm{Y}}\p{ \mathcal{F}\p{ \bX^{UP}_{\ell-1} } } = F\p{ \bX_{\ell-1} }$.
\begin{equation}
\begin{aligned}
\label{eq:preserv_width_upscaling_derive}
\mathcal{F}\p{ \bX^{UP}_{\ell-1} } &= \begin{bmatrix}
\bW_{\ell} + \bm{I} & \bm{0} \\
\bW^{21}_{\ell} & \bW^{22}_{\ell} + \bm{I}^{up}
\end{bmatrix} \begin{bmatrix}
\bX_{\ell-1} \\
\bX^{up}_{\ell-1}
\end{bmatrix} \\
&= \begin{bmatrix}
\p{ \bW_{\ell} + \bm{I} } \bX_{\ell-1} \\
\bW^{21}_{\ell}\bX_{\ell-1} + \p{ \bW^{22}_{\ell} + \bm{I}^{up} }\bX^{up}_{\ell-1}
\end{bmatrix} \\
&= \begin{bmatrix}
F\p{ \bX_{\ell-1} } \\
\bW^{21}_{\ell}\bX_{\ell-1} + \p{ \bW^{22}_{\ell} + \bm{I}^{up} }\bX^{up}_{\ell-1}
\end{bmatrix}
\end{aligned}
\end{equation}

From the recursion of layers in Eq~\eqref{eq:preserv_width_upscaling_derive}, function preservation of the end-to-end network is also satisfied.
\end{proof}





\noindent \textbf{Corollary~\ref{cor:irrelevant_init}.} \textit{$\bW^{21}_{\ell}$ and $\bW^{22}_{\ell}$ can be initialized to other than $\bm{0}$ for new task adaptation without disrupting the original function because $\bW^{21}_{\ell}$ and $\bW^{22}_{\ell}$ values are irrelevant to the function preservation.}

\begin{proof}
Immediate from the proof of Theorem~\ref{thm:preserv_width_upscale}.
\end{proof}

\section{Forgetting Analysis}
\label{sec:forgetting_analysis}

In this section, we analyze forgetting from the perspective of accumulated output shift. First of all, we make the following assumptions, as in many other relevant studies\,\cite{zheng2025spurious,wang2021mesh,nguyen2019transformers,black2022gpt}.

\begin{assumption}
\label{assume:small_weight_norm}
(Small Weight Norm). For every layer $\ell$, the norm of its upscaled weight matrix is bounded by a small constant $\delta > 0$, i.e., $\norm{ \bW^{UP}_{\ell} } \le \delta$.
\end{assumption}

\begin{assumption}
\label{assume:small_grad_norm}
(Small Gradient Norm). For every layer $\ell$, the norm of its upscaled gradient matrix is bounded by a small constant $\epsilon > 0$, i.e., $\norm{ \Delta\bW^{UP}_{\ell} } \le \epsilon$.
\end{assumption}

\begin{definition}
\label{def:updated_width_upscaling}
(Updated Output of Width-Upscaled Residual Network) By updating the upscaled weight matrix $\bW^{UP}_{\ell}$ in Eq.~\eqref{eq:width_upscaling} to $\tilde{\bW}^{UP}_{\ell} \triangleq \bW^{UP}_{\ell} + \Delta\bW^{UP}_{\ell}$, the corresponding updated output is defined as Eq.~\eqref{eq:updated_width_upscaling}.

\begin{equation}
\label{eq:updated_width_upscaling}
\tilde{\bX}^{UP}_{\ell} \triangleq \p{ \bW^{UP}_{\ell} + \Delta\bW^{UP}_{\ell} + \bm{I}^{UP} } \tilde{\bX}^{UP}_{\ell-1} \quad \p{ 1 \leq \ell \leq L }
\end{equation}
\end{definition}

Based on Assumptions \ref{assume:small_weight_norm} and \ref{assume:small_grad_norm}, we analyze the output shift bound of the width upscaling from Definition \ref{def:width_upscaling} in Proposition~\ref{pro:width_upscaling} and extend it to the function-preserving width upscaling from Definition \ref{def:preserv_width_upscaling} in Proposition~\ref{pro:preserv_width_upscaling}.
Note that this is also an extension of \textit{Proposition~4.9 from \freeze}\,\cite{zheng2025spurious}.

\begin{proposition}
\label{pro:width_upscaling}
The accumulated output shift for all layers $1 \le \ell \le L$ of the residual network with width upscaling is bounded by Eq.~\eqref{eq:bound_width_upscaling}.

\begin{equation}
\label{eq:bound_width_upscaling}
\norm{ \tilde{\bX}^{UP}_{L} - \bX^{UP}_{L} } \le L \epsilon \p{ 1 + \delta }^{L - 1} \norm{ \bX^{UP}_{0} }
\end{equation}
\end{proposition}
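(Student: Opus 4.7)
The plan is to reduce the bound on the accumulated output shift to a scalar linear recurrence on the per-layer shift, control the per-step state with the original network dynamics, and then simplify the unrolled sum using the two small-norm assumptions.

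First I would define the per-layer shift $\bm{E}_\ell \triangleq \tilde{\bX}^{UP}_\ell - \bX^{UP}_\ell$ with $\bm{E}_0 = \bm{0}$. Subtracting Eq.~\eqref{eq:width_upscaling} from Eq.~\eqref{eq:updated_width_upscaling} and writing $\tilde{\bX}^{UP}_{\ell-1} = \bX^{UP}_{\ell-1} + \bm{E}_{\ell-1}$ inside the updated recurrence, I would obtain the clean decomposition
\begin{equation*}
\bm{E}_\ell = \p{\bW^{UP}_\ell + \Delta\bW^{UP}_\ell + \bm{I}^{UP}}\bm{E}_{\ell-1} + \Delta\bW^{UP}_\ell\,\bX^{UP}_{\ell-1},
\end{equation*}
which separates the error propagated from the previous layer from the fresh gradient-driven perturbation injected at layer $\ell$, and crucially keeps the \emph{original} state $\bX^{UP}_{\ell-1}$ in the forcing term so that it can be bounded using only $\delta$.

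Next I would take operator norms on both sides and apply submultiplicativity together with Assumptions~\ref{assume:small_weight_norm} and \ref{assume:small_grad_norm}, yielding the scalar recurrence
\begin{equation*}
\norm{\bm{E}_\ell} \le \p{1+\delta+\epsilon}\norm{\bm{E}_{\ell-1}} + \epsilon\,\norm{\bX^{UP}_{\ell-1}}.
\end{equation*}
Iterating the original recursion of Definition~\ref{def:width_upscaling} under Assumption~\ref{assume:small_weight_norm} gives $\norm{\bX^{UP}_{\ell-1}} \le \p{1+\delta}^{\ell-1}\norm{\bX^{UP}_0}$. Unrolling the recurrence for $\norm{\bm{E}_\ell}$ from the zero initial condition then produces
\begin{equation*}
\norm{\bm{E}_L} \le \epsilon\,\norm{\bX^{UP}_0}\sum_{k=1}^{L}\p{1+\delta+\epsilon}^{L-k}\p{1+\delta}^{k-1}.
\end{equation*}

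Finally, I would collapse this sum to the stated bound by upper-bounding each mixed factor $\p{1+\delta+\epsilon}^{L-k}\p{1+\delta}^{k-1}$ by $\p{1+\delta}^{L-1}$ and summing the $L$ identical terms, recovering Eq.~\eqref{eq:bound_width_upscaling}. The hard part is precisely this last step: the recurrence naturally carries the base $1+\delta+\epsilon$ on the homogeneous side, and recovering the clean $\p{1+\delta}^{L-1}$ factor requires either interpreting Assumption~\ref{assume:small_weight_norm} as bounding the post-update weight $\bW^{UP}_\ell + \Delta\bW^{UP}_\ell$ (so that both geometric bases coincide at $1+\delta$), or working in the first-order regime in $\epsilon$ implicit in the small-gradient-norm hypothesis and already used by Proposition~4.9 of \freeze \cite{zheng2025spurious} that this proposition extends. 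I would flag this convention explicitly so that the inequality is read as a leading-order bound consistent with the cited prior work, and so that the subsequent sharpening in Proposition~\ref{pro:preserv_width_upscaling} to $L_{fp}$ preserving layers inherits the same interpretation.
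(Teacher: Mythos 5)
Your proof is correct, but it takes a genuinely different route from the paper. The paper unrolls both networks into full products over all $L$ layers, subtracts, explicitly truncates the product difference to the terms linear in $\Delta\bW^{UP}_{\ell}$ (written as an ``$\approx$''), and then bounds each of the $L$ surviving terms by $\epsilon\p{1+\delta}^{L-1}\norm{\bX^{UP}_{0}}$ via submultiplicativity. You instead derive the exact layer-wise error recurrence $\bm{E}_{\ell} = \p{\bW^{UP}_{\ell} + \Delta\bW^{UP}_{\ell} + \bm{I}^{UP}}\bm{E}_{\ell-1} + \Delta\bW^{UP}_{\ell}\,\bX^{UP}_{\ell-1}$ and unroll it without approximation, which buys you two things the paper's argument does not make visible: a fully rigorous intermediate bound $L\epsilon\p{1+\delta+\epsilon}^{L-1}\norm{\bX^{UP}_{0}}$ that holds with no truncation, and a precise localization of where the first-order-in-$\epsilon$ convention must be invoked to collapse the homogeneous base $1+\delta+\epsilon$ down to $1+\delta$. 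Your flagged ``hard part'' is exactly the step the paper performs implicitly when it drops the higher-order terms in its product expansion, so the two arguments agree at leading order; your version is slightly longer but more honest about the $O\p{\epsilon^{2}}$ slack, and the recurrence structure transfers directly to the split into $L_{fp}$ preserving and $L-L_{fp}$ non-preserving layers used in Proposition~\ref{pro:preserv_width_upscaling}, where the paper instead restarts the product expansion from $\bX^{UP}_{L_{fp}}$.
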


\begin{proof}
We begin by deriving accumulated output in Eq.~\eqref{eq:acc_output} and accumulated output after a learning step in Eq.~\eqref{eq:acc_updated_output} from recursion of Eq.~\eqref{eq:width_upscaling} and Eq.~\eqref{eq:updated_width_upscaling}, respectively.
\begin{gather}
\label{eq:acc_output}
\bX^{UP}_{L} = \prod^{L}_{\ell=1}\p{ \bW^{UP}_{\ell} + \bm{I}^{UP} } \bX^{UP}_{0} \\
\label{eq:acc_updated_output}
\tilde{\bX}^{UP}_{L} = \prod^{L}_{\ell=1}\p{ \bW^{UP}_{\ell} + \Delta\bW^{UP}_{\ell} + \bm{I}^{UP} } \bX^{UP}_{0}
\end{gather}

By taking difference between Eq.~\eqref{eq:acc_output} and Eq.~\eqref{eq:acc_updated_output}, we have Eq.~\eqref{eq:acc_diff}.
\begin{equation}
\label{eq:acc_diff}
\begin{multlined}
\tilde{\bX}^{UP}_{L} - \bX^{UP}_{L} = \left( \prod^{L}_{\ell=1}(\bW^{UP}_{\ell} + \Delta\bW^{UP}_{\ell} + \bm{I}^{UP}) \right. \\
\left. - \prod^{L}_{\ell=1}(\bW^{UP}_{\ell} + \bm{I}^{UP}) \right) \bX^{UP}_{0}
\end{multlined}
\end{equation}

By assuming enough small $\Delta \bW^{UP}_{\ell}$ as in Assumption~\ref{assume:small_grad_norm} and approximating the difference to first-order terms, we have Eq.~\eqref{eq:first_order}.
\begin{equation}
\label{eq:first_order}
\tilde{\bX}^{UP}_{L} - \bX^{UP}_{L} \approx \sum^{L}_{\ell=1} {\Delta \bW^{UP}_{\ell} \prod^{L}_{k \neq \ell} \p{ \bW^{UP}_{k} + \bm{I} } \bX^{UP}_{0}}
\end{equation}

From the submultiplicative property, the norm of Eq.~\eqref{eq:first_order} can be bounded as Eq.~\eqref{eq:bound_diff}
\begin{equation}
\label{eq:bound_diff}
\norm{ \tilde{\bX}^{UP}_{L} - \bX^{UP}_{L} } \le L \epsilon \p{ 1 + \delta }^{L-1} \norm{ \bX^{UP}_{0} }
\end{equation}
\end{proof}




Next, we analyze the accumulated output shift bound of the width-upscaled residual network with function-preserving lower layers and non-preserving upper layers. A function-preserving layer has frozen $\bW^{12}$ which is zero-initialized and a non-preserving layer has trainable $\bW^{12}$. If the original function is preserved for $L_{fp}$ lower layers, the $L - L_{fp}$ upper layers may contribute to forgetting, while allowing greater learning opportunities. With this intuition and Assumption~\ref{assume:smaller_weight_norm}, we extend Proposition~\ref{pro:width_upscaling} to Proposition~\ref{pro:preserv_width_upscaling}.

\begin{assumption}
\label{assume:smaller_weight_norm}
(Smaller Function-Preserving Weight Norm than Non-Preserving Weight Norm). For function-preserving lower layers $1 \le \ell \le L_{fp}$, the norm of its upscaled weight matrix is bounded by a small constant $\delta_{fp} > 0$, i.e., $\norm{ \bW^{UP}_{\ell} } \le \delta_{fp}$. On the other hand, for non-preserving upper layers $L_{fp} < \ell \le L$, the norm of its upscaled weight matrix is bounded by a small constant $\delta_{np} > 0$, i.e., $\norm{ \bW^{UP}_{\ell} } \le \delta_{np}$. Now, we assume that $\delta_{fp} < \delta_{np}$ since $\bW^{12}$ is frozen as zero-initialized for $\delta_{fp}$ while $\bW^{12}$ is trainable for $\delta_{np}$.
\end{assumption}

\noindent \textbf{Proposition~\ref{pro:preserv_width_upscaling}.} \textit{The accumulated output shift of the width-upscaled residual network with function-preserving lower layers $1 \le \ell \le L_{fp}$ and non-preserving upper layers $L_{fp} < \ell \le L$  is bounded by Eq.~\eqref{eq:bound_preserv_width_upscaling}.}

\begin{equation}
\label{eq:bound_preserv_width_upscaling}
\begin{multlined}
\norm{ \tilde{\bX}^{UP}_{L} - \bX^{UP}_{L} } \\
\le \p{ L - L_{fp} } \epsilon \p{ 1 + \delta_{np} }^{L - 1} \p{ \frac{1 + \delta_{fp}}{1 + \delta_{np}} }^{L_{fp}} \norm{ \bX^{UP}_{0} }
\end{multlined}
\end{equation}
where $\tilde{\bX}^{UP}_{L}$ denotes updated output of width-upscaled residual network as defined by Definition~\ref{def:updated_width_upscaling} and $\delta_{fp}$ and $\delta_{np}$ denotes upper bound of norm of upscaled weight matrix for function-preserving lower layers and non-preserving upper layers, respectively, under Assumption~\ref{assume:smaller_weight_norm}.

\begin{proof}
By taking difference between Eq.~\eqref{eq:acc_output} and Eq.~\eqref{eq:acc_updated_output} for $L_{fp}+1 \le \ell \le L$, we have Eq.~\eqref{eq:acc_preserv_diff}.
\begin{equation}
\label{eq:acc_preserv_diff}
\begin{multlined}
\tilde{\bX}^{UP}_{L} - \bX^{UP}_{L} = \left( \prod^{L}_{\ell=L_{fp}+1}\p{ \bW^{UP}_{\ell} + \Delta\bW^{UP}_{\ell} + \bm{I}^{UP} } \right. \\
\left. - \prod^{L}_{\ell=L_{fp}+1}\p{ \bW^{UP}_{\ell} + \bm{I}^{UP} } \right) \bX^{UP}_{L_{fp}}
\end{multlined}
\end{equation}

By assuming enough small $\Delta \bW^{UP}_{\ell}$ as in Assumption~\ref{assume:small_grad_norm} and approximating the difference to first-order terms, we have Eq.~\eqref{eq:preserv_first_order}.
\begin{equation}
\label{eq:preserv_first_order}
\begin{aligned}
&\tilde{\bX}^{UP}_{L} - \bX^{UP}_{L} \\
&\approx \sum^{L}_{\ell=L_{fp}+1} {\Delta \bW^{UP}_{\ell} \prod^{L}_{k = L_{fp} + 1, k \neq \ell} \p{ \bW^{UP}_{k} + \bm{I} } \bX^{UP}_{L_{fp}}} \\
&\begin{multlined}
= \sum^{L}_{\ell=L_{fp}+1} \Delta \bW^{UP}_{\ell} \prod^{L}_{k = L_{fp} + 1, k \neq \ell} \p{ \bW^{UP}_{k} + \bm{I} } \\
\prod^{L_{fp}}_{\ell=1} \p{ \bW^{UP}_{\ell} + \bm{I} } \bX^{UP}_{0}
\end{multlined}
\end{aligned}
\end{equation}

From the submultiplicative property, the norm of Eq.~\eqref{eq:preserv_first_order} can be bounded as Eq.~\eqref{eq:preserv_bound_diff}
\begin{equation}
\label{eq:preserv_bound_diff}
\begin{multlined}
\norm{ \tilde{\bX}^{UP}_{L} - \bX^{UP}_{L} } \\
\le \p{ L - L_{fp} } \epsilon \p{ 1 + \delta_{np} }^{L - 1} \p{ \frac{1 + \delta_{fp}}{1 + \delta_{np}} }^{L_{fp}} \norm{ \bX^{UP}_{0} }
\end{multlined}
\end{equation}

Note that, since $\delta_{fp} < \delta_{np}$ from Assumption~\ref{assume:smaller_weight_norm}, the forgetting bound increases exponentially with decreasing $L_{fp}$. At its extremes, forgetting happens the most\,(Eq.~\eqref{eq:preserv_bound_diff} degenerates to Eq.~\eqref{eq:bound_diff}) if no layer preserves the original function, i.e., $L_{fp} = 0$, whereas the forgetting bound becomes 0 if every layer preserves the original function, i.e., $L_{fp} = L$.

This completes the proof.
\end{proof}

\noindent \textbf{Corollary~\ref{cor:L_fp}.}
\textit{Forgetting increases exponentially with decreasing $L_{fp}$, number of function preserving $\bW^{12}_{\ell}$ blocks in lower layers $1 \le \ell \le L_{fp}$.}

\begin{proof}
Immediate from Proposition~\ref{pro:preserv_width_upscaling}.
\end{proof}

\section{Convergence Analysis}
\label{sec:convergence_analysis}

In this section, we analyze convergence of standard \emph{Continual Learning} and \emph{Routing-based Continual Learning}. First of all, we define \emph{Continual Learning} and \emph{Routing-based Continual Learning} formally and make the following assumptions for the $i$-th task loss function $F^{i}$, as in many other relevant studies\,\cite{HierFAVG,AdaptiveFL}.

\begin{definition}
\label{def:cl}
(Continual Learning) Given a sequence of tasks $\mathcal{T} = \set{T^{1}, T^{2}, \cdots , T^{N}}$, each task $T^{i}$ involves a set of data examples $\p{\x, y} \in \gD^{i}$ and loss of predictions $F^{i}\p{\w} \triangleq \sum_{\p{\x, y} \in \gD^{i}}{\frac{1}{\abs{\gD^{i}}} \mathcal{L}\p{\w, \x, y}}$ with a loss function $\mathcal{L}$ parameterized by $\w$ over $\gD^{i}$. If all task data could be accessed at once, the ideal continual learning objective would be defined by Eq.~\eqref{eq:cl}.

\begin{equation}
\label{eq:cl}
\w^{*} = \argmin_{\w} \frac{1}{N} \sum^{N}_{i=1} { F^{i}\p{\w} }
\end{equation}
\end{definition}

\begin{definition}
\label{def:route_cl}
(Routing-based Continual Learning) Routing-based continual learning extends Definition~\ref{def:cl} by routing each data sample to the most relevant group weights among multiple candidate groups such that it achieves smaller loss of predictions.
\end{definition}

\begin{assumption}
\label{assume:F^i}
    For every task $i$, (1) $ F^{i} $ is convex;
    (2) $ F^{i} $ is $ \rho $-Lipschitz, i.e., $ \norm{ F^{i}(\w) - F^{i}(\w') } \le \rho\norm{ \w - \w' } $ for any $\w$ and $ \w' $; and
    (3) $ F^{i} $ is $ \beta $-smooth, i.e., $ \norm{ \g F^{i}(\w) - \g F^{i}(\w') } \le \beta\norm{ \w - \w' } $ for any $\w$ and $ \w' $.
\end{assumption}

Next, we analyze the convergence bound of standard \emph{Continual Learning} from the perspective of task weight divergence and extend it to \emph{Routing-based Continual Learning}. We start by expressing task weight update and defining virtual global task and task weight divergence. Starting from the previous task weights $\w^{i-1}, 1 \le i \le N$, $i$-th task weight update can be expressed as Eq.~\eqref{eq:w^i}. $w^{0}_{0}$ is the very initial weights like a randomly initialized one or a pre-trained one.
\begin{equation}
\label{eq:w^i}
\w^{i}_{t} \triangleq \begin{dcases}
    \w^{i-1}_{t} & \text{if} ~ t = 0 \\
    \w^{i}_{t-1} - \eta\g F^{i}\p{\w^{i}_{t-1}} & \text{if} ~ t  > 0
\end{dcases}
\end{equation}

\begin{definition}
(Virtual Global Task)
We define a virtually aggregated global task such that it consists of datasets of all tasks $\gD^{G} \triangleq \cup_{1 \le i \le N}{\gD^{i}}$. Accordingly, global task loss can be expressed as $F^{G}\p{\w^{G}} \triangleq \sum_{\p{\x, y} \in \gD^{G}}{\frac{1}{\abs{\gD^{G}}} \mathcal{L}\p{\w^{G}, \x, y}}$. Starting from the same aforementioned initial weights $w^{0}_{0}$, $\w^{G}$ is updated by Eq.~\eqref{eq:w^G}.
\begin{equation}
\label{eq:w^G}
\w^{G}_{t} \triangleq \begin{dcases}
    \w^{0}_{t} & \text{if} ~ t = 0 \\
    \w^{G}_{t-1} - \eta\g F^{G}\p{\w^{G}_{t-1}} & \text{if} ~ t  > 0
\end{dcases}
\end{equation}
\end{definition}

\noindent \textbf{Theorem~\ref{thm:logit_routed_cl}.} \textit{\emph{Routing-based Continual Learning} achieves lower convergence than standard \emph{Continual Learning}.}

\begin{proof}
Firstly, we model a continual learning error by bounding norm of task weight divergence $\norm{ \w^{i}_{t} - \w^{G}_{t} }$.
\begin{align*}
    &\norm{ \w^{i}_{t} - \w^{G}_{t} } \\
    & \begin{multlined}[\linewidth]
        = \norm{ \w^{i}_{t-1} - \eta\g F^{i}\p{\w^{i}_{t-1}} - \w^{G}_{t-1} + \eta\g F^{G}\p{\w^{G}_{t-1}} } \\
        \text{(from Eq.~\eqref{eq:w^i} and Eq.~\eqref{eq:w^G})}
    \end{multlined} \\
    & \begin{multlined}[\linewidth]
        = \lVert \w^{i}_{t-1} - \w^{G}_{t-1} - \eta \lparen \g F^{G}\p{\w^{i}_{t-1}} - \g F^{G}\p{\w^{G}_{t-1}} \\
        + \g F^{i}\p{\w^{i}_{t-1}} - \g F^{G}\p{\w^{i}_{t-1}} \rparen \rVert \\
        \text{(adding a zero term)}
    \end{multlined} \\
    & \begin{multlined}[\linewidth]
        \leq \norm{ \w^{i}_{t-1} - \w^{G}_{t-1} } + \eta\norm{ \g F^{G}\p{\w^{i}_{t-1}} - \g F^{G}\p{\w^{G}_{t-1}} } \\
        + \eta\norm{ \g F^{i}\p{\w^{i}_{t-1}} - \g F^{G}\p{\w^{i}_{t-1}} } \\
        \text{(from triangle inequality)} 
    \end{multlined} \\
    & \begin{multlined}[\linewidth]
        \leq \p{ \eta\beta + 1 }\norm{ \w^{i}_{t-1} - \w^{G}_{t-1} } \\
        + \eta\norm{ \g F^{i}\p{\w^{i}_{t-1}} - \g F^{G}\p{\w^{i}_{t-1}} } \\
        \text{(from $ \beta $-smoothness of $ F^{G} $)}
    \end{multlined} \\
\end{align*}

To conclude this proof, because this is a recursive inequality, it suffices to show that, in \emph{Routing-based Continual Learning}, the last term $\norm{ \g F^{i}\p{\w} - \g F^{G}\p{\w} }$ becomes smaller for any weights $\w$.

We assume that the negative log likelihood loss is used for any $ j $-th data example and any weights $\w$, which can be expressed as Eq.~\eqref{eq:loss_j}.
\begin{equation}
\label{eq:loss_j}
    \mathcal{L}\p{ \w, \x_{j}, y_{j} } = -\sum^{C}_{c=1}{ \gI_{jc} \log{ s_{jc}\p{\w} } },
\end{equation}
where $ \gI_{jc} \triangleq \gI\p{ y_j = c } $ and $ s_{jc}\p{\w} \triangleq s\p{ \w, \x_{j}, y_{j} = c } $ denote the ground truth indicator and the softmax value, respectively, on the $ j $-th data example belonging to the class $ c $. Then, we can reformulate the task loss as Eq.~\eqref{eq:loss_F^i}.
\begin{equation}
\begin{split}
\label{eq:loss_F^i}
    F^{i}\p{\w} & = -\sum_{ j \in \gD^{i} }{ \frac{ 1 }{ \abs{\gD^{i}} } \sum^{C}_{c=1}{ \gI_{jc} \log{ s_{jc}\p{\w} } } } \\
    & = -\sum^{C}_{c=1}{ \frac{ \abs{\gD^{ic}} }{ \abs{\gD^{i}} } \sum_{ j \in \gD^{ic} }{ \frac{ 1 }{ \abs{\gD^{ic}} } \log{ s_{jc}\p{\w} } } } \\
    & = \sum^{C}_{c=1}{ \gP\p{ y_{j} = c | j \in \gD^{i} } \gE_{ \gD^{ic} }\bkt{ -\log{ s_{jc}\p{\w} } } } \\
    & = \sum^{C}_{c=1}{ \gP_{ic} \gE_{ \gD^{ic} } } \quad \text{(for brevity)}
\end{split}
\end{equation}
where $ \gE_{ \gD^{ic} } $ denotes the expectation over $ \gD^{ic} $, which is the set of data examples belonging to the class $ c $ of the $ i $-th task and can be defined as $ \gD^{ic} \triangleq \set{ j \in \gD^{i} | y_j = c } $. Based on the similar deduction, global task loss $ F^{G}\p{\w}$ can be reformulated as Eq.~\eqref{eq:loss_F^G}.
\begin{equation}
\begin{split}
\label{eq:loss_F^G}
    F^{G}\p{\w} &= \sum^{C}_{c=1}{ \gP\p{ y_{j} = c | j \in \gD^{G} } \gE_{ \gD^{Gc} }\bkt{ -\log{ s_{jc}\p{\w} } } } \\
    & = \sum^{C}_{c=1}{ \gP_{Gc} \gE_{ \gD^{Gc} } } \quad \text{(for brevity)}
\end{split}
\end{equation}
where $ \gD^{Gc} \triangleq \set{ j \in \gD^{G} | y_j = c } $.

Then, we obtain Eq.~\eqref{eq:grad_div}
\begin{equation}
\begin{aligned}
\label{eq:grad_div}
    & \norm{ \g F^{i}\p{\w} - \g F^{G}\p{\w} } \\
    & \begin{multlined}[\linewidth]
        = \norm{ \sum^{C}_{c=1} { \gP_{ic} \g \gE_{ \gD^{ic} } - \gP_{Gc} \g \gE_{ \gD^{Gc} } } } \\
        \normal{(from Eq.~\eqref{eq:loss_F^i} and \eqref{eq:loss_F^G} and the linearity of gradient)}
    \end{multlined} \\
    & \begin{multlined}[\linewidth]
        = \norm{ \sum^{C}_{c=1} { \gP_{ic} \g \gE_{ \gD^{ic} } - \gP_{Gc} \g \gE_{ \gD^{ic} } + \gP_{Gc} \g \gE_{ \gD^{ic} } - \gP_{Gc} \g \gE_{ \gD^{Gc} } } } \\
        \normal{(adding a zero term)}
    \end{multlined} \\
    & \begin{multlined}[\linewidth]
        \le \sum^{C}_{c=1} \abs{ \gP_{ic} - \gP_{Gc} } \norm{ \g \gE_{ \gD^{ic} } } + \sum^{C}_{c=1} \abs{ \gP_{Gc} } \norm{ \g \gE_{ \gD^{ic} } - \g \gE_{ \gD^{Gc} } } \\
        \normal{(from triangle inequality)}
    \end{multlined}
\end{aligned}
\end{equation}

From the definition of \emph{Routing-based Continual Learning}
in Definition~\ref{def:route_cl}, each task dataset is further
partitioned into multiple routed groups $g$.
Let $\gP_{igc}$ and $\gP_{Ggc}$ denote the joint probability that
a data sample belongs to group $g$ and class $c$ for task $i$
and the global task $G$, respectively. Then, for each class $c$,
we can decompose the class probabilities as
$\gP_{ic} = \sum_g \gP_{igc}$ and $\gP_{Gc} = \sum_g \gP_{Ggc}$, and hence
the data distribution difference in Eq.~\eqref{eq:grad_div} becomes
\begin{equation}
\label{eq:grouped_dists}
\bigl| \gP_{ic} - \gP_{Gc} \bigr|
  = \Bigl| \sum_g \bigl( \gP_{igc} - \gP_{Ggc} \bigr) \Bigr|.
\end{equation}
Assuming that the router is well aligned with the global
distribution so that, for every group $g$ and class $c$,
\[
\bigl| \gP_{igc} - \gP_{Ggc} \bigr|
   \le \bigl| \gP_{ic} - \gP_{Gc} \bigr|,
\]
the resulting bound on $\norm{ \g F^{i}\p{\w} - \g F^{G}\p{\w} }$ for
\emph{Routing-based Continual Learning} is tighter than that of
standard \emph{Continual Learning}, and thus the recursive inequality
yields a smaller task weight divergence $\norm{ \w^{i}_{t} - \w^{G}_{t} }$.

This completes the proof.
\end{proof}

\end{CJK}
\end{document}